\newcommand*{\Scale}[2][4]{\scalebox{#1}{\ensuremath{#2}}} 
\definecolor{shadecolor}{rgb}{.95,.95,.95}  
\theoremstyle{definition}
\newtheorem{myTheorem}{Theorem}
\tikzstyle{every edge}=  [draw]
\tikzstyle{vertex} = [draw,circle,minimum size=1pt]
\tikzstyle{label} = [minimum size=.1pt,font=\scriptsize]
\tikzstyle{title} = [minimum size=.25cm,font=\small]
\newcommand{\bs}[1]{\boldsymbol{#1}}
\newcommand{\bhat}[1]{\boldsymbol{\hat{#1}}}
\def \R{\mathbb{R}}
\def \1{{\mathds{1}}}
\def \T{\mathsf{T}}
\def \spn{{\rm span}}
\def \Ord{\mathscr{O}}
\def \<{\langle}
\def \>{\rangle}
\DeclareMathOperator*{\argmin}{arg\,min}
\def \Fr{{\hyperref[FrDef]{{\rm F}}}}
\def \K{{\hyperref[KDef]{{\rm K}}}}
\def \n{{\hyperref[nDef]{{\rm n}}}}
\def \d{{\hyperref[dDef]{{\rm d}}}}
\def \r{{\hyperref[rDef]{{\rm r}}}}
\def \xi{{\hyperref[xiDef]{{\rm x}}}}
\def \w{{\hyperref[wDef]{{\rm w}}}}
\def \p{{\hyperref[pDef]{{\rm p}}}}
\def \lambdaa{{\hyperref[lambdaaDef]{\lambda}}}
\def \gammaa{{\hyperref[gammaaDef]{\gamma}}}
\def \kappaa{{\hyperref[kappaaDef]{\kappa}}}
\def \sigmaa{{\hyperref[sigmaaDef]{\sigma}}}
\def \rhoo{{\hyperref[rhooDef]{\rho}}}
\def \x{{\hyperref[xDef]{\bs{{\rm x}}}}}
\def \btheta{{\hyperref[bthetaDef]{\bs{\theta}}}}
\def \I{{\hyperref[IDef]{\bs{{\rm I}}}}}
\def \X{{\hyperref[XDef]{\bs{{\rm X}}}}}
\def \U{{\hyperref[UDef]{\bs{{\rm U}}}}}
\def \P{{\hyperref[PDef]{\bs{{\rm P}}}}}
\def \bnabla{{\hyperref[bnablaDef]{\bs{\nabla}}}}
\def \bTheta{{\hyperref[bThetaDef]{\bs{\Theta}}}}
\def \S{{\hyperref[SDef]{\bs{{\rm S}}}}}
\def \W{{\hyperref[WDef]{\bs{{\rm W}}}}}
\def \i{{\hyperref[iDef]{{\rm i}}}}
\def \j{{\hyperref[jDef]{{\rm j}}}}
\def \k{{\hyperref[kDef]{{\rm k}}}}
\def \t{{\hyperref[tDef]{{\rm t}}}}
\def \O{{\hyperref[ODef]{\Omega}}}
\def \o{{\hyperref[oDef]{\Omega}}}
\def \sU{{\hyperref[sUDef]{\mathbb{U}}}}
\def \FSC{{\hyperref[FSCDef]{{\sc Fsc}}}}
\def \SSC{{\hyperref[SSCDef]{SSC}}}
\def \EM{{\hyperref[EMDef]{EM}}}
\def \UoS{{\hyperref[UoSDef]{UoS}}}
\def \LRMC{{\hyperref[LRMCDef]{LRMC}}}
\def \PCA{{\hyperref[PCADef]{PCA}}}
\def\BibTeX{{\rm B\kern-.05em{\sc i\kern-.025em b}\kern-.08em
    T\kern-.1667em\lower.7ex\hbox{E}\kern-.125emX}}
\begin{document}


\title{Fusion Subspace Clustering for Incomplete Data
}


\author{\IEEEauthorblockN{Usman Mahmood }
\IEEEauthorblockA{\textit{Department of Computer Science} \\
\textit{Georgia State University }\\
Atlanta, GA, USA \\
umahmood1@gsu.edu}
\and
\IEEEauthorblockN{Daniel Pimentel-Alarcón}
\IEEEauthorblockA{\textit{Department of Biostatistics and Medical Informatics} \\
\textit{University of Wisconsin-Madison}\\
Madison, WI, USA \\
pimentelalar@wisc.edu}
\and

}

\maketitle

\begin{abstract}
This paper introduces {\em fusion subspace clustering}, a novel method to learn low-dimensional structures that approximate large scale yet highly incomplete data. The main idea is to assign each datum to a subspace of its own, and minimize the distance between the subspaces of all data, so that subspaces of the same cluster get {\em fused} together. Our method allows low, high, and even full-rank data; it directly accounts for noise, and its sample complexity approaches the information-theoretic limit. In addition, our approach provides a natural model selection {\em clusterpath}, and a direct completion method. We give convergence guarantees, analyze computational complexity, and show through extensive experiments on real and synthetic data that our approach performs comparably to the state-of-the-art with complete data, and dramatically better if data is missing.

\end{abstract}

\section{Introduction}
Inferring low-dimensional structures that explain high-dimensional data has become a cornerstone of discovery in virtually all fields of science. \phantomsection\label{PCADef}Principal component analysis (\PCA), which identifies the low-dimensional linear subspace that best explains a dataset, is arguably the most prominent technique for this purpose. In many applications \---- computer vision, image processing, bioinformatics, linguistics, networks analysis, and more \cite{kanade, charRecog, kanatani, lambertian, recommender, scc, eriksson, guessWho, ssc, network} \---- data is often composed of a mixture of several classes, each of which can be explained by a different subspace. Clustering accordingly is an important unsupervised learning problem that has received tremendous attention in recent years, producing theory and algorithms to handle outliers, noisy measurements, privacy concerns, and data constraints, among other difficulties \cite{scVidal, liu1, liu2, mahdi, qu, peng, wang, aarti, hu, scalableSC, L0sparse, dataDependent, full1, full2, full3, full4, full5, full6, greedySC, LRSC, LSR}.


However, one major contemporary challenge is that data is often incomplete. For example, in image inpainting, the values of some pixels are missing due to faulty sensors and image contamination \cite{inpainting}; in computer vision features are often missing due to occlusions and tracking algorithms malfunctions \cite{occlusions}; in recommender systems each user only rates a limited number of items \cite{collaborativeRanking}; in a network, most nodes communicate in subsets, producing only a handful of all the possible measurements \cite{eriksson}.

{\bf Missing data notoriously complicates clustering.} The main difficulty with highly incomplete data is that subsets of points are rarely observed in overlapping coordinates, which impedes assessing distances. Existing self-expressive formulations \cite{ssc, ewzf}, agglomerative strategies \cite{greedySC}, and partial neighborhoods \cite{hrmc}, all require observing $\Ord(\r+1)$ overlapping coordinates in at least $\K$ sets of $\Ord(\r+1)$ points in order to cluster $\K$ $\r$-dimensional subspaces. In low-sampling regimes, this would require a super-polynomial number of points \cite{hrmc}, which are rarely available in practice. Alternatively, filling missing entries with a sensible value (e.g., zeros or means \cite{ewzf} or using low-rank matrix completion \cite{candes-recht}) may work if data is missing at a rate inversely proportional to the subspaces' dimensions \cite{tsakiris}, or if data is low-rank. However, in most applications data is missing at much higher rates, and due to the number and dimensions of the subspaces, data is typically high or even full-rank. In general, data filled with zeros or means no longer lie in a \phantomsection\label{UoSDef}union of subspaces (\UoS), thus guaranteeing failure even with a modest amount of missing data \cite{elhamifar}. Other approaches include alternating methods like $k$-subspaces \cite{kGROUSE}, expectation-maximization \cite{ssp14}, group-lasso \cite{gssc}, and lifting techniques \cite{elhamifar,greg,ladmc} that require (at the very least) squaring the dimension of an already high-dimensional problem, which severely limits their applicability. More recently methods like \cite{latentFactorAnalysis,missingValueImputation} incorporate a variation of fuzzy c-means for data imputation and/or clustering. However, these existing approaches either have limited applicability or do not perform well if data is missing in large quantities \cite{CompareApproaches}. For example, k-nearest neighbors imputation can distort the data distribution, resulting in inaccurate nearest neighbors identification \cite{KNNProblem}. Regression methods can also lead to low accuracy, especially if the underlying variables have low correlation. Existing approaches along with their weaknesses are compared by \cite{CompareApproaches}. These challenges call the attention to new strategies to address missing data.

%


{\bf This paper} introduces \phantomsection\label{FSCDef}{\em fusion subspace clustering} (\FSC), a novel approach to address incomplete data, inspired by greedy methods, convex relaxations, and fusion penalties \cite{fusionVariable, fusedLasso, groupPursuit, clusterpath, sumofnorms, fusion}. The main idea is to assign each datum to a subspace of its own, and then {\em fuse} together nearby subspaces by minimizing (i) the distance between each datum and its subspace (thus guaranteeing that each datum is explained by its subspace), and (ii) the distance between the subspaces of all data, so that subspaces from points that belong together get fused into one. While \FSC\ is mainly motivated by missing data, it is also new to full-data, and has the next advantages: it allows low, high, and even full-rank data. \FSC\ directly allows noise, and its sample complexity approaches the information-theoretic limit \cite{infoTheoretic}, as shown in sections \ref{sec: Effect of the number of subspaces and data points} and \ref{sec:Effect of missing data}. Similar to hierarchical clustering, \FSC\ can produce a model selection {\em clusterpath} providing detailed information about intra-cluster and cluster-to-cluster distances (see Figure \ref{treeFig}). Finally, its simplicity makes \FSC\ amenable to analysis: our main theoretical result shows that \FSC\ converges to a local minimum. This is particularly remarkable, especially in light that most other subspace clustering algorithms lack theoretical guarantees (even local convergence) when data is missing (except for restrictive fractions of missing entries, and liftings, which are unfeasible for high-dimensional data). Our experiments on real and synthetic data show that with full-data, \FSC\ performs comparably to the state-of-the-art, and dramatically better if data is missing.

\section{Problem Statement}
\label{problemSec}
Let \phantomsection\label{XDef}\phantomsection\label{dDef}\phantomsection\label{nDef}$\X \in \R{}^{\d \times \n}$ be a data matrix whose columns lie {\em approximately} in the union of \phantomsection\label{KDef}$\K$ low-dimensional subspaces of $\R{}^\d$ (i.e., we allow noise).  Assume that we do not know a priori the subspaces, nor how many there are, nor their dimensions, nor which column belongs to which subspace.  Let $\X{}^\O$ denote the incomplete version of $\X$, observed only in the entries of \phantomsection\label{ODef}$\O \subset \{1,\dots,\d\} \times \{1,\dots,\n\}$.  Given $\X{}^\O$, our goals are to cluster the columns of $\X{}^\O$ according to the underlying subspaces, infer such subspaces, and complete $\X{}^\O$.

{\bf Notations.} Throughout the paper, \phantomsection\label{xDef}$\x_\i \in \R{}^\d$ denotes the $\i{}^{\rm th}$ column of $\X$, \phantomsection\label{sUDef}$\sU_\i \subset \R{}^\d$ denotes the subspace assigned to $\x_\i$, and \phantomsection\label{UDef}$\U_\i \in \R{}^{\d \times \r}$ is a basis of $\sU_\i$; here \phantomsection\label{iDef}$\i=1,\dots,\n$, and \phantomsection\label{rDef}$\r$ is an upper bound on the dimension of the subspaces. Given $\i$, we use the superscript \phantomsection\label{oDef}$\o$ to indicate the restriction of a subspace, matrix or vector to the observed entries in $\x_\i$. For example, if $\x_\i$ is observed on $\ell$ rows, then $\x{}_{\i}^\o \in \R{}^{\ell}$ and $\U{}_{\i}^\o \in \R{}^{\ell \times \r}$ denote the restrictions of $\x_\i$ and $\U_\i$ to the observed rows in $\x_\i$. We use \phantomsection\label{FrDef}$\|\bs{\cdot}\|{}_\Fr$ to denote the Frobenius norm, and $\1$ to denote the indicator function.

%
%

\section{Fusion Subspace Clustering}
\phantomsection\label{lambdaaDef}First notice that we can write (full-data) subspace clustering as the following optimization problem:

\begin{align}
\label{hierarchicalEq}
\argmin_{\U_1,\dots,\U_\n} \
\sum_{\i=1}^\n \rhoo^2(\x_\i,\U_\i) 
\hspace{.1cm} \text{ s.t. } \hspace{.1cm}
\frac{1}{2} \sum_{\i=1}^\n \sum_{\j=1}^\n \1_{\{\rhoo(\U_\i,\U_\j)=0\}} \leq \lambdaa,
\end{align}
where\phantomsection\label{rhooDef}
\begin{align*}
\rhoo(\x_\i,\U_\i) \ &:= \ \big\| \x_\i-\U_\i (\U_\i^\T \U_\i)^{-1}\U_\i^\T \x_{\i} \big\|_2, \\
\rhoo(\U_\i,\U_\j) \ &:= \ \big\| \U_\i (\U_\i^\T \U_\i)^{-1}\U_\i^\T - \U_\j (\U_\j^\T \U_\j)^{-1}\U_\j^\T \big\|_\Fr.
\end{align*}
Recall that the projector operator onto $\spn\{\U_\i\}$ is $\P_\i:=\U_\i (\U_\i^\T \U_\i)^{-1}\U_\i^\T$, so $\rhoo(\x_\i,\U_\i)$ is simply measuring the distance between a point and a subspace through its projection residual. Similarly, $\rhoo(\U_\i,\U_\j)$ defines a metric on the Grassmannian \cite{grassDist}, measuring distance between subspaces through their projection operators, which unlike bases, are unique for each subspace. Formulation \eqref{hierarchicalEq} assigns each point $\x_\i$ to a subspace of its own (with basis $\U_\i$). Minimizing $\rhoo(\x_\i,\U_\i)$ ensures that each point is close to its assigned subspace. The constraint ensures that there are no more than $\Ord(\sqrt{\lambdaa})$ different subspaces. Notice that if $\lambdaa \geq \n(\n-1)/2$ (the number of distinct pairs of points), then the problem is unconstrained, and a trivial solution is $\U_\i$ formed by $\x_\i$ and any other $\r-1$ vectors (in fact this is precisely our choice for initialization, with the additional $\r-1$ vectors populated with i.i.d.~$\mathscr{N}(0,1)$ entries, known to produce incoherent and nearly orthogonal subspaces with high probability \cite{gaussianIncoherence}). If $\lambdaa=\n(\n-1)/2-1$, then \eqref{hierarchicalEq} forces two subspaces to {\em fuse}, similar to the first step in hierarchical clustering. More generally, if $\lambdaa=\n(\n-1)/2-\ell$, then \eqref{hierarchicalEq} forces $\ell-1$ subspaces to fuse.
However, \eqref{hierarchicalEq} is a combinatorial problem, so we propose the following relaxation:
\begin{align*}
\argmin_{\U_1,\dots,\U_\n} \
\sum_{\i=1}^\n \rhoo^2(\x_\i,\U_\i) 
\ + \
\frac{\lambdaa}{2} \sum_{\i=1}^\n \sum_{\j=1}^\n \w_{\i\j} \ \rhoo(\U_\i,\U_\j).
\end{align*}
Notice that this is an $\ell_1$-group penalty that promotes sparsity in the terms $\rhoo(\U_\i,\U_\j)$ \cite{antoniadisFan, yuanLin, meier, tibshirani}; however, in our case the groups are not known and need to be discovered, similar to task structure learning \cite{pkdd}. The weights $\w_{\i\j} \geq 0$ quantify how much attention we give to each penalty. Ideally, if $\x_\i$ and $\x_\j$ are not in the same subspace, we would like the penalty $\rhoo(\U_\i,\U_\j)$ to be ignored, so that $\sU_\i$ and $\sU_\j$ do not tilt one another. Conversely, if $\x_\i$ and $\x_\j$ are in the same subspace, we want the penalty $\rhoo(\U_\i,\U_\j)$ to have more weight, so that $\sU_\i$ and $\sU_\j$ get fused together. Here again $\lambdaa \geq 0$ is a proxy of $\K$ that regulates how subspaces fuse together. The larger $\lambdaa$, the more we penalize subspaces being apart, which results in more subspaces getting fused. Sections \ref{modelSelectionSec} and \ref{weightsSec} contain more details about $\lambda$ and $\w_{\i\j}$.

The main reason we find this fusion formulation so attractive is its capacity to effectively isolate the burden of missing data on the first terms, where it can be easily handled. That is because under standard incoherence assumptions, the incomplete-data residual norm $\rhoo(\x_\i^\o,\U_\i^\o):=\|\x_\i^\o-\U_\i^\o (\U_\i^{\o\T} \U_\i^\o)^{-1}\U_\i^{\o\T} \x^\o_\i \|_2$ will be proportionally close to the full-data residual norm $\rhoo(\x_\i,\U_\i)$ with high probability (Theorem 1 in \cite{msdmd}). So if data is missing, all we need to do is replace the first terms, so that each subspace only fits the observed entries of its assigned column. This leaves the second terms unaffected to be used as proxies to compute distances between incomplete points, even if they are observed on disjoint coordinates. With these observations we obtain our general \FSC\ formulation:
\begin{framed}
\begin{align}
\label{ifscEq}
\argmin_{\U_1,\dots,\U_\n} \
\sum_{\i=1}^\n \rhoo^2(\x_\i^\o,\U_\i^\o)
\ + \
\frac{\lambdaa}{2} \sum_{\i=1}^\n \sum_{\j=1}^\n \w_{\i\j} \ \rhoo(\U_\i,\U_\j).
\end{align}
\end{framed}
We point out that \eqref{ifscEq} only requires that each point is {\em close} to its corresponding subspace, as opposed to {\em exactly} on it, so it directly allows noise. To solve \eqref{ifscEq} we use coordinate gradient descent. Since we use $\w_{\i\j} = \sqrt{\r\d} \ \1_\kappaa {\{ \exp(-\gammaa \rhoo^2(\U_\i,\U_\j)}) \}$ (see Section \ref{weightsSec}), the gradient of \eqref{ifscEq} with respect to $\U_\i$ is:
\begin{align*}
\bnabla_\i \ = \ \bnabla_\i' \ &+ \ 4 \lambdaa \sum_{\j \neq \i,  \1_{\kappaa} = 1} 
 \frac{\w_{\i\j} \bnabla''_{\i\j}(1-\gamma \rhoo^2(\U_\i,\U_\j))}{\rhoo(\U_\i,\U_\j)},
\end{align*}
where $\bnabla_\i'$ is equal to $\bs{0}$ for the rows not in $\o$,
\begin{align*}
\bnabla_\i' = &-2 \x_\i^\o \x_\i^{\o\T} \U_\i^\o (\U_\i^{\o\T} \U_\i^\o)^{-1}\\
&+ (\U_\i^\o\U_\i^{\o\T})^2\x_\i^\o\x_\i^{\o\T}\U_\i^\o(\U_\i^{\o\T}\U_\i^\o)^{-1} \\
&+ \U_\i^\o (\U_\i^{\o\T}\U_\i^\o)^{-1} \U_\i^{\o\T}\x_\i^\o\x_\i^{\o\T}\U_\i^\o \U_\i^{\o\T}\U_\i^\o
\end{align*}
for the rows in $\o$, and $\bnabla_{\i\j}'' = (\U_\i(\U_\i^\T\U_\i)^{-1}\U_\i^\T-\I)\U_\j (\U_\j^\T\U_\j)^{-1}\U_\j^\T \U_\i(\U_\i^\T\U_\i)^{-1}$.

Whether data is missing or not, the solution to \eqref{ifscEq} will be a sequence of (full-data) bases $\U_1,\dots,\U_\n$, one for each column in $\X$. Due to the second term in \eqref{ifscEq}, we expect subspace $\sU_\i=\spn\{\U_\i\}$ to be close to $\sU_\j=\spn\{\U_\j\}$ if columns $\x_\i$ and $\x_\j$ belong together, and far otherwise. It remains to group together subspaces that are close, or equivalently, assign a label to each subspace $\sU_\i$. To this end, use spectral clustering, which shows remarkable performance in many modern problems, and is widely used as the final step in many subspace clustering algorithms, including the state-of-the-art \SSC.
Spectral clustering receives a similarity matrix \phantomsection\label{SDef}$\S \in \R{}^{\n \times \n}$ between $\n$ points, and runs a standard clustering method (like $k$-means) on the relevant eigenvectors of the Laplacian matrix of $\S$ \cite{spectral}. We can build a similarity matrix $\S$ between subspaces $\sU_1,\dots,\sU_\n$ whose $(\i,\j){}^{\rm th}$ entry is equal to $1/\rhoo(\U_\i,\U_\j)$. At this point we can run spectral clustering on $\S$ to assign a label \phantomsection\label{kDef}$\k_\i \in \{1,\dots,\K\}$ to each subspace $\sU_\i$, or equivalently, to each column $\x_\i$, thus providing a clustering of $\X$, as desired.

%
%
%
%
%

\subsection{Subspace Estimation and Data Completion}
\label{completionSec}
Recall that our goals are to: (i) cluster the columns of $\X{}^\O$, (ii) infer the underlying subspaces, and (iii) complete $\X{}^\O$. So far we have only achieved (i). However, that is the difficult step. In fact, once $\X{}^\O$ is clustered, there are several straightforward ways to achieve (ii) and (iii).
Common approaches concatenate all the columns of $\X{}^\O$ that correspond to the same cluster into a single matrix $\X{}^\O_{\k}$, and complete it into a matrix $\bhat{\X}_\k$ using low-rank matrix completion (\LRMC) \cite{candes-recht} (because its columns now lie in a single subspace), thus achieving (iii). To accomplish (ii) one can compute the leading singular vectors of $\bhat{\X}_\k$ to produce an subspace basis estimate $\bhat{\U}_\k$. We can do this as well. However, \FSC\ does not require \LRMC, which may fail if the subspaces are coherent (aligned with the canonical axes) or samples are not uniformly spread \cite{candes-recht}.

Our \LRMC-free approach is as follows: since the bases $\U_\i$ produced by \eqref{ifscEq} have no missing data, we can normalize and concatenate all the bases that correspond to the $\k{}^{\rm th}$ cluster into a single matrix \phantomsection\label{WDef}$\W_\k$, and compute its leading singular vectors to produce an {\em average} estimate $\bhat{\U}_\k$, thus achieving (ii). Next we can estimate the coefficient of each $\x{}_\i^\o$ with respect to its corresponding subspace basis $\bhat{\U}_{\k_\i}$: \phantomsection\label{bthetaDef}$\bhat{\btheta}_\i := (\bhat{\U}{}_{\k_\i}^{\o\T} \bhat{\U}{}_{\k_\i}^\o)^{-1}\bhat{\U}{}_{\k_\i}^{\o\T}\x{}_\i^\o$. Since the coefficient of $\x{}_\i^\o$ is the same as the coefficient of $\x_\i$ we can complete $\x{}_\i^\o$ as $\bhat{\x}_\i=\bhat{\U}_{\k_\i}\bhat{\btheta}_\i$, thus achieving (iii).
In our experiments both \LRMC\ and our approach yield the exact same results up to numerical precision.

\begin{figure*}
\centering
\includegraphics[width=0.95\textwidth]{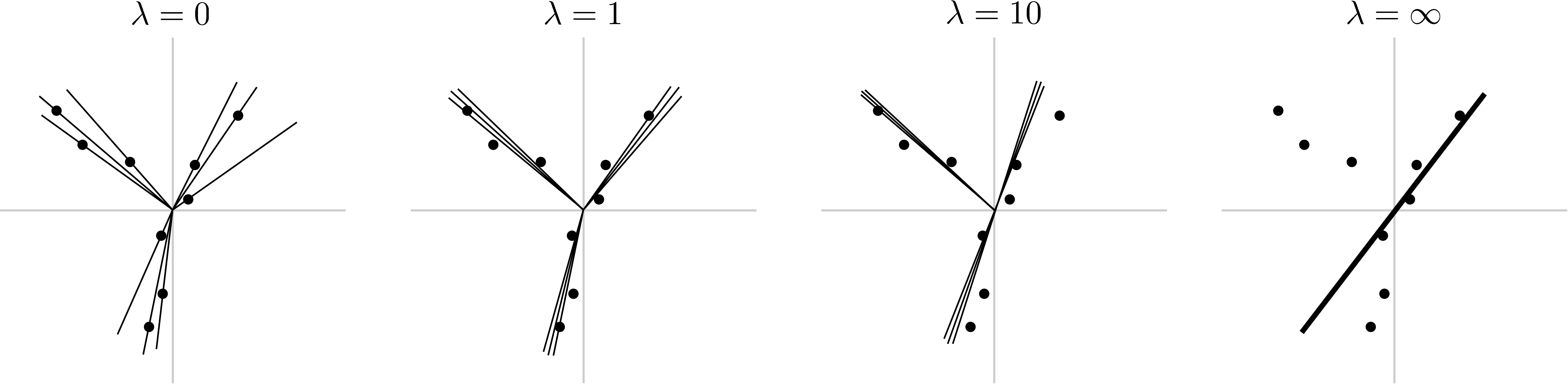}
\caption{
In \eqref{ifscEq}, $\lambdaa \geq 0$ regulates how clusters fuse together. If $\lambdaa=0$, each point is assigned to a subspace that exactly contains it (overfitting). The larger $\lambdaa$, the more we penalize subspaces being apart, which results in subspaces getting closer (as with $\lambdaa=1$), up to the point that some subspaces fuse (as with $\lambdaa=10$). In the extreme ($\lambdaa=\infty$), all subspaces fuse together, and we need to explain all data with a single subspace (which may not be enough). Notice that it is not always evident how many subspaces one should use to explain a dataset. In this illustration, should we choose $\lambdaa=1$, which would result in $3$ subspaces, or $\lambdaa=10$, which would result in $2$? Section \ref{modelSelectionSec} discusses how to choose $\lambdaa$, which in turn determines the number of subspaces $\K$ that best explain the data, and their dimensions.}
\label{fscFig}
\end{figure*}

\section{Convergence Guarantees}
Our main theoretical result shows that a sequence of gradient iterates of \FSC\ will converge to a critical point, which is not generally the case for methods of this type \cite{powell, nocedal}.
\begin{myTheorem}
\label{mainThm}
\textit{
The sequence of gradient iterates $\{\U_\i{}^{(\t)}\}_{\i \in [\n], \t>0}$ of  \eqref{ifscEq} has an accumulation point. Moreover, any accumulation point of $\{\U_\i{}^{(\t)}\}_{\i \in [\n], \t>0}$ is a critical point of \eqref{ifscEq}.}
\end{myTheorem}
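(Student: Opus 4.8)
Write $F(\U_1,\dots,\U_\n)$ for the objective in \eqref{ifscEq} and let $\{\U_\i^{(\t)}\}$ be the block-coordinate gradient iterates, one super-iteration cycling a gradient step (step size $\eta_\t$) through the blocks $\i=1,\dots,\n$. The plan is to combine three ingredients: (i) a descent lemma, so each block update decreases $F$ by at least a fixed multiple of $\|\bnabla_\i\|_\Fr^2$; (ii) the invariance $F(\U_1\bs A_1,\dots,\U_\n\bs A_\n)=F(\U_1,\dots,\U_\n)$ for invertible $\bs A_\i\in\R^{\r\times\r}$, which forces $\U_\i^\T\bnabla_\i=\bs 0$ and so keeps the iterates bounded; and (iii) Bolzano--Weierstrass plus continuity of $\bnabla_\i$ to certify that any limit is critical.

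I would first note that $F\ge 0$, hence is bounded below. On the open region $\mathcal R$ where every $\U_\i^\o$ has full column rank and every \emph{active} pair (those with $\w_{\i\j}\neq 0$) has $\rhoo(\U_\i,\U_\j)>0$, the projectors $\U_\i^\o(\U_\i^{\o\T}\U_\i^\o)^{-1}\U_\i^{\o\T}$ and $\P_\i$ depend smoothly on the bases, so $F$ is $C^1$ on $\mathcal R$ with a locally Lipschitz gradient; the $\kappaa$-threshold in $\w_{\i\j}$ is exactly what keeps the $1/\rhoo(\U_\i,\U_\j)$ factor in $\bnabla_\i$ from blowing up. The descent lemma then gives, for step sizes small relative to the local Lipschitz modulus (or obtained by backtracking line search), $F(\text{after updating block }\i)\le F(\text{before})-c\,\|\bnabla_\i\|_\Fr^2$ with $c>0$. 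Telescoping over all block updates, $\{F(\U^{(\t)})\}$ is non-increasing and bounded below, hence convergent, while $\sum_\t\sum_\i\|\bnabla_\i^{(\t)}\|_\Fr^2<\infty$; in particular $\bnabla_\i^{(\t)}\to\bs 0$ and $\|\U_\i^{(\t+1)}-\U_\i^{(\t)}\|_\Fr=\eta_\t\|\bnabla_\i^{(\t)}\|_\Fr\to 0$ for every $\i$.

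Next I would use that $F$ sees each $\U_\i$ only through its column space, so it is invariant under $\U_\i\mapsto\U_\i\bs A_\i$ with $\bs A_\i$ invertible; differentiating this identity along the orbit directions $\U_\i\bs B$ gives $\U_\i^\T\bnabla_\i=\bs 0$. Hence a gradient step is Frobenius-orthogonal to $\U_\i$, so $\|\U_\i^{(\t+1)}\|_\Fr^2=\|\U_\i^{(\t)}\|_\Fr^2+\eta_\t^2\|\bnabla_\i^{(\t)}\|_\Fr^2$; summing in $\t$ and invoking the summability above, $\sup_\t\|\U_\i^{(\t)}\|_\Fr<\infty$. Thus $\{\U^{(\t)}\}$ is bounded and, by Bolzano--Weierstrass, some subsequence converges to $\U^\star=(\U_1^\star,\dots,\U_\n^\star)$, which is the first claim. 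For the second, along that subsequence $\U^{(\t_k+1)}\to\U^\star$ too (successive differences vanish), and likewise every partially-updated point inside super-iteration $\t_k$ tends to $\U^\star$; if $\U^\star$ again lies in $\mathcal R$ then $\bnabla_\i$ is continuous there, so its value at $\U^\star$ equals $\lim_k\bnabla_\i^{(\t_k)}=\bs 0$ for every $\i$, and $\U^\star$ is a critical point of \eqref{ifscEq}.

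The main obstacle is the bookkeeping around the non-smooth and degenerate locus, which is where both the descent lemma and the final continuity step actually bite: one must argue (or, as is customary, assume in light of the sampling model and initialization) that the iterates and their accumulation points remain in $\mathcal R$ --- the restricted bases $\U_\i^\o$ never lose column rank and active pairs never collapse onto the same subspace --- since only there are $F$ and $\bnabla_\i$ well defined and continuous. The $\kappaa$-thresholded weights $\w_{\i\j}$ dispose of the collapse issue; full column rank of $\U_\i^\o$ needs at least $\r$ observed entries per column and then holds generically, and can be reinforced by a small ridge term or by periodically renormalizing $\U_\i$, which by the invariance leaves $F$ unchanged. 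The rest is a routine block-coordinate gradient-descent argument.
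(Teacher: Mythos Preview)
Your route is genuinely different from the paper's. The paper does not run a descent-lemma/telescoping argument at all: it observes that the objective depends on each $\U_\i$ only through its projector, restricts each $\U_\i$ to be orthonormal (so $\|\U_\i\|_\Fr$ is constant and boundedness is immediate), and then outsources the ``accumulation points are critical'' conclusion to Lemma~3.2 of Beck's paper on alternating minimization, after checking that lemma's hypotheses (each $g_\i=\rhoo^2(\x_\i,\U_\i)$ is closed/proper, $f$ is $C^1$ with block-Lipschitz gradient thanks to the bound $\rhoo(\U_\i,\U_\j)\le 2\sqrt{\r}$ and $\w_{\i\j}\le\sqrt{\r\d}$, and a minimizer exists by compactness). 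Your argument is more self-contained: you use the \emph{same} invariance, but differentiate it to get $\U_\i^\T\bnabla_\i=\bs 0$, hence gradient steps are Frobenius-orthogonal to $\U_\i$, and you control the resulting growth $\|\U_\i^{(\t+1)}\|_\Fr^2=\|\U_\i^{(\t)}\|_\Fr^2+\eta_\t^2\|\bnabla_\i^{(\t)}\|_\Fr^2$ via the summability $\sum_\t\|\bnabla_\i^{(\t)}\|_\Fr^2<\infty$ coming from the descent lemma. That is a nice observation and buys you an elementary proof without citing an external convergence lemma.

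The one place to be careful is a circularity in your boundedness step: the descent lemma with a \emph{uniform} constant $c>0$ needs a uniform block-Lipschitz modulus along the whole trajectory, which in turn typically requires the iterates to stay in a fixed compact subset of $\mathcal R$ --- exactly what you are trying to prove. With the $(\U_\i^{\o\T}\U_\i^\o)^{-1}$ and $(\U_\i^\T\U_\i)^{-1}$ factors in $\bnabla_\i$, the Lipschitz constant can degrade as $\|\U_\i\|_\Fr$ grows or conditioning worsens, so the telescoped bound $\sum_\t c\,\|\bnabla_\i^{(\t)}\|_\Fr^2\le F(\U^{(0)})$ is not yet justified with a single $c$. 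The paper sidesteps this entirely by working with orthonormal $\U_\i$, which both fixes $\|\U_\i\|_\Fr$ and collapses $\U_\i(\U_\i^\T\U_\i)^{-1}\U_\i^\T$ to $\U_\i\U_\i^\T$, making the Lipschitz bound global and trivial. Your own closing remark about ``periodically renormalizing $\U_\i$'' is precisely this fix; if you elevate it from a parenthetical to the actual mechanism (renormalize after each block update, which by invariance leaves $F$ unchanged), the circularity disappears and the rest of your argument goes through cleanly.
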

\begin{proof}
Since the objective function in \eqref{ifscEq} only depends on $\U_\i$ through its projection matrix, its solution is equivalent to the solution restricting $\U_\i$ to be orthonormal, whence $\|\U_\i\|_\Fr=\r$. It follows that the sequence $\{\U_\i{}^{(\t)}\}_{\i \in [\n], \t>0}$ will be bounded. By the Bolzano Weierstrass theorem, it contains a convergent subsequence, whose limit will be an accumulation point. To show that accumulation points are critical points we will demonstrate that the conditions of Lemma 3.2 in \cite{beck} are met. Let
\begin{align*}
&f(\U_1,\dots,\U_\n) := \frac{\lambdaa}{2} \sum_{\i=1}^\n \sum_{\j=1}^\n \w_{\i\j} \ \rhoo(\U_\i,\U_\j),\\
&g_\i(\U_\i) := \rhoo^2(\x_\i,\U_\i).
\end{align*}
Next notice that:
\begin{enumerate}[label=(\alph*)]
\item
$g_\i$ is closed because it is continuous with a closed domain (see Section A.3.3 in \cite{boyd}). $g_\i$ is also proper convex, because $g_1$ is a norm \cite{ding}. Finally, $g_\i$ is sub differentiable.
\item
$f$ is continuously differentiable, because with $\U_\i$ orthonormal, $\U_\i (\U_\i^\T \U_\i)^{-1}\U_\i^\T$ simplifies to $\U_\i\U_\i{}^\T$, whence $f$ is a polynomial.
\item
The gradient of $f$ is Lipschitz continuous with respect to $\U_\i$, because we are using $\w_{\i\j} = \sqrt{\r\d} \ \1_\kappaa {\{ \exp(-\gammaa \rhoo^2(\U_\i,\U_\j)}) \} \leq \sqrt{\r\d}$ (see Section \ref{weightsSec}), and $\rhoo(\U_\i,\U_\j) \leq \|\U_\i (\U_\i^\T \U_\i)^{-1}\U_\i^\T\|_\Fr + \|\U_\j (\U_\j^\T \U_\j)^{-1}\U_\j^\T\|_\Fr \leq \sqrt{\r} \|\U_\i (\U_\i^\T \U_\i)^{-1}\U_\i^\T\|_2 + \sqrt{\r} \|\U_\j (\U_\j^\T \U_\j)^{-1}\U_\j^\T\|_2 \leq 2\sqrt{\r}$.
\item
The objective function in \eqref{ifscEq} is continuous with closed and bounded domains. It follows by the extreme value theorem that it must have a minimum, and hence a minimizer.
\end{enumerate}
Conditions $(a)$-$(d)$ are the assumptions of Lemma 3.2 in \cite{beck}. Theorem \ref{mainThm} follows directly.
\end{proof}

\section{Model Selection}
\label{modelSelectionSec}
\FSC\ provides a natural way for model selection, namely determining the number of subspaces $\K$ that best explain the data, and their dimensions.
Intuitively, the first term in \eqref{ifscEq} guarantees that each subspace $\sU_\i$ is close to its assigned column, and the second term guarantees that subspaces from different columns are close to one another. The tradeoff between these two quantities is determined by $\lambdaa \geq 0$ (see Figure \ref{fscFig}). If $\lambdaa=0$, then the second term is ignored, and there is a trivial solution where each subspace exactly contains its assigned column (thus attaining the minimum, zero, in the first term). If $\lambdaa>0$, the second term forces subspaces from different columns to get closer, even if they no longer contain exactly their assigned columns. As $\lambdaa$ grows, subspaces get closer and closer, up to the point where some subspaces {\em fuse} into one. This is verified in our experiments (see Figure \ref{resultsFig}). The extreme case ($\lambdaa=\infty$) forces all subspaces to fuse into one (to attain zero in the second term), meaning we only have one subspace to explain all data, which is precisely \PCA\ (for full-data) and \LRMC\ (for incomplete data). In other words, \FSC\ is a generalization of \PCA\ and \LRMC, which is the particular case of \eqref{ifscEq} with $\lambdaa=\infty$. Iteratively decreasing $\lambdaa$ will result in more and more clusters, until $\lambdaa=0$ produces $\n$. The more subspaces, the more accuracy, but the more degrees of freedom (overfitting). For each $\lambdaa$ that provides a different clustering, we can compute a goodness of fit test (like the Akaike information criterion, AIC \cite{akaike}) that quantifies the tradeoff between accuracy and degrees of freedom, to determine the best number of subspaces $\K$. For example, this test can be in the form of $\K$, and the residuals of the projections of each $\x{}_{\i}^\o$ onto its corresponding $\bhat{\U}{}_\k^\o$, as defined in Section \ref{completionSec}. Similarly, we can iteratively increase $\r$ to find all the columns that lie in $1$-dimensional subspaces, then all the columns that lie in $2$-dimensional subspaces, and so on (pruning the data at each iteration).  This will result in an estimate of the number of subspaces $\K$, and their dimensions.

\textbf{The Subspace Clusterpath.} Notice that iteratively increasing $\lambdaa$ also provides a natural way to quantify and visualize intra-cluster and outer-cluster similarities through a graph showing the evolution of subspaces as they fuse together, similar to the {\em clusterpath} produced in \cite{clusterpath} for euclidian clustering (Figure \ref{treeFig}). Notice, however, that fusion is not necessarily monotonic, i.e., fused subspaces may split, so in general this graph may be a network, rather than a tree.


\begin{figure*}
\centering
\includegraphics[height=3.45cm]{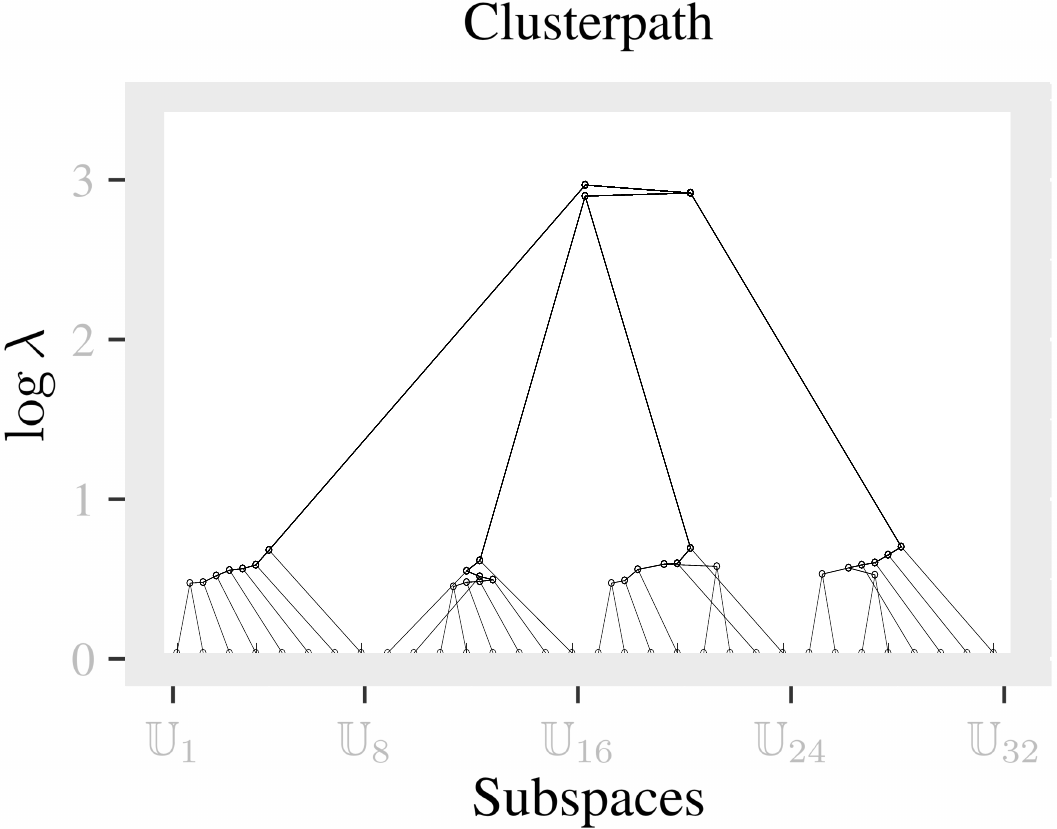} \hspace{.5cm}
\includegraphics[height=3.45cm]{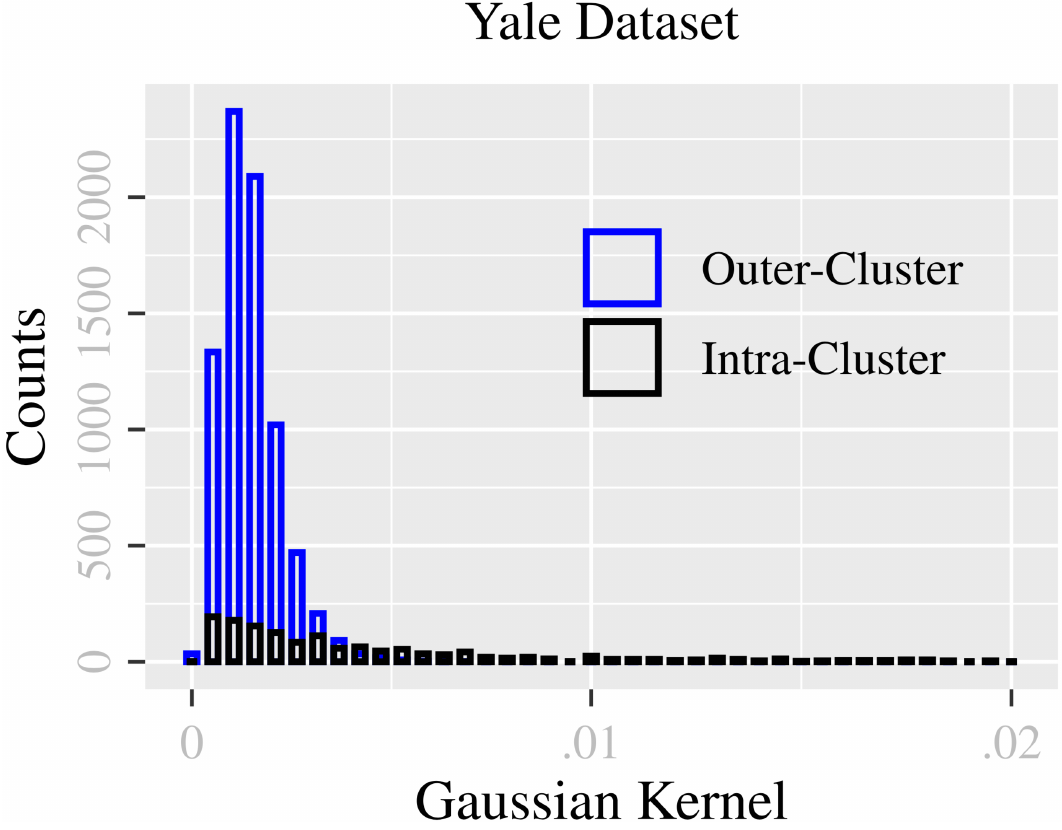} \hspace{.5cm}
\includegraphics[height=3.45cm]{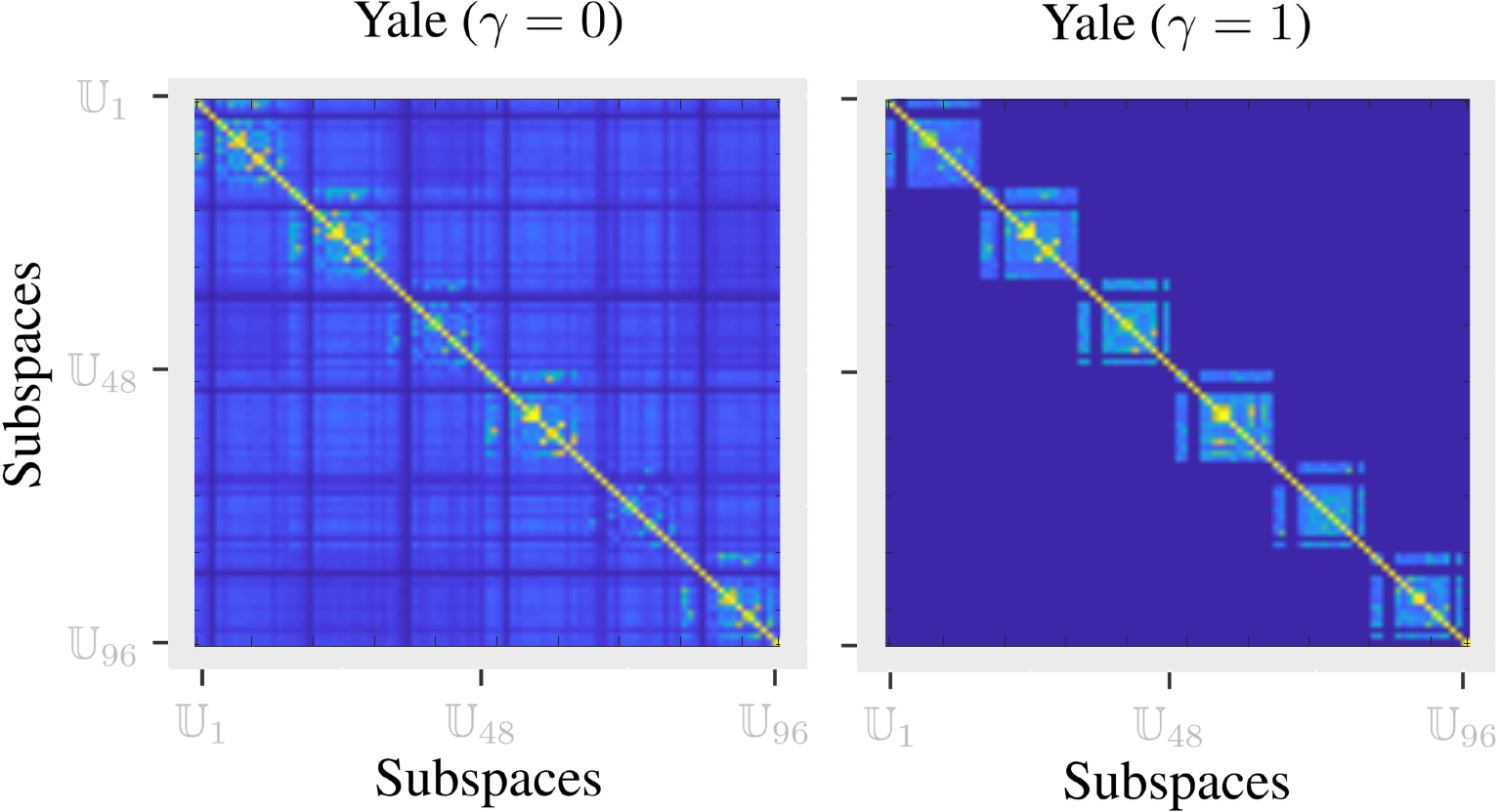}
\caption{\textbf{Left:} Clusterpath showing how subspace estimates (corresponding 32 data points in 4 gaussian subspaces, 8 each) progressively fuse as $\lambdaa$ increases. \textbf{Center:} Distribution of Gaussian kernels in the Yale B dataset with $\gamma=1$; notice that most outer-cluster points receive small values and vice versa, as desired. \textbf{Right:} Similarity matrices produced by \FSC\ on the Yale dataset with $\gamma=0$ (uniform weights, producing a poor clustering because subspaces are not well-separated), and with $\gamma=1$ and $\kappaa=\r$ (Gaussian kernel weights with nearest neighbors, producing a near perfect clustering).}
\label{treeFig}
\end{figure*}

\section{Penalty Weights, Computational Complexity, and Parameters}
\label{weightsSec}
Like other fusion formulations \cite{antoniadisFan, yuanLin, meier, tibshirani,pkdd}, \FSC\ involves weight terms $\w_{\i\j}$ that bring the flexibility to distinguish which subspaces to fuse, and which ones not to, which in turn can dramatically improve the clustering quality and computational complexity \cite{chi,tan}. Ideally, we would like $\w_{\i\j}$ to be large if $\x_\i$ and $\x_\j$ lie in the same subspace, so that the penalty $\rhoo(\U_\i,\U_\j)$ gets a higher weight, forcing subspaces $\sU_\i$ and $\sU_\j$ to fuse into one. Conversely, if $\x_\i$ and $\x_\j$ lie in different subspaces, we want $\w_{\i\j}$ to be small, so that the penalty $\rhoo(\U_\i,\U_\j)$ gets ignored, and subspaces $\sU_\i$ and $\sU_\j$ do not fuse.

Here we use \phantomsection\label{kappaaDef}\phantomsection\label{gammaaDef}$\w_{\i\j}= \sqrt{\r\d} \ \1_\kappaa {\{ \exp(-\gammaa \rhoo^2(\U_\i,\U_\j)}) \}$, where the indicator $\1_\kappaa$ takes the value one if $\j$ is amongst the $\kappaa$ nearest neighbors of $\i$ or vice versa, and zero otherwise. Here the factor $\sqrt{\r\d}$ ensures that the penalty is in the order of the number of degrees of freedom (the same rescaling is used in \cite{yuanLin,meier}), and the second factor is a Gaussian kernel that slows the fusion of distant subspaces \cite{chi,tan}, where $\gammaa \geq 0$ regulates how separated subspaces are. In particular $\gammaa=0$ corresponds to uniform weights ($\w_{\i\j}=1$ for every ($\i,\j$)), known to be a good option if subspaces are well-separated, and to produce no {\em splits} in the clusterpath of Euclidean clustering (Theorem 1 in \cite{clusterpath}). More generally, with $\gammaa>0$ the second factor measures the distance between subspaces $\sU_\i$ and $\sU_\j$ such that if $\sU_\i$ and $\sU_\j$ are close, then $\rhoo(\U_\i,\U_\j)$ will be small, resulting in a large value of $\exp(-\gammaa \rhoo^2(\U_\i,\U_\j))$ (and vice versa), as desired. Figure \ref{treeFig} shows the distribution of these Gaussian kernels on the Yale B dataset (which are mostly small for outer-cluster points, and mostly large for intra-cluster points, as desired), together with the similarity matrices produced by two options ($\gamma=0$ and $\gamma=1$).

We point out that besides improving clustering quality, limiting positive weights to nearest neighbors (NNs) improves the computational complexity of \FSC. To see this first notice that finding NNs requires $\n^2$ {\em linear} operations to compute pairwise distances. However, these calculations are negligible in comparison with the {\em polynomial} operations required to compute $\n^2$ gradients (which require matrix inversions). Consequently, by limiting positive weights to $\kappaa$ NNs we cut down these $\n^2$ polynomial calculations to $\kappaa\n$, thus reducing the effective computational complexity of \FSC\ to $\Ord(\d\r\kappaa\n)$ and achieving linear complexity in problem size (see Sections 3.3 and 5.1 in \cite{chi}). This, of course, comes at a price: increasing the effective number of parameters of \FSC\ to a total of four: $\lambdaa$, which controls how much subspaces fuse (see Section \ref{modelSelectionSec}), $\gammaa$ and $\kappaa$, which together determine the weights $\w_{\i\j}$ that control which subspaces fuse, and the gradient step size, which can be tuned with standard techniques; in our experiments we used 5-fold cross-validation.

\section{Experiments}
\label{experimentsSec}
Now we study the performance of \FSC. For reference, we compare against the following subspace clustering algorithms that allow missing data:
$(a)$ Entry-wise zero-filling \SSC\ \cite{ewzf}.
$(b)$ \LRMC\ + \SSC\ \cite{ewzf}.
$(c)$ \SSC-Lifting \cite{elhamifar}.
$(d)$ Algebraic variety high-rank matrix completion \cite{greg}.
$(e)$ $k$-subspaces with missing data \cite{kGROUSE}.
$(f)$ \EM\ \cite{ssp14}.
$(g)$ Group-sparse subspace clustering \cite{gssc}.
$(h)$ Mixture subspace clustering \cite{gssc}.
We chose these algorithms based on \cite{gssc, elhamifar}, where they show comparable state-of-the-art performance. $(a)$-$(d)$ essentially run \SSC\ after fill missing entries with zeros, according to a single larger subspace, or after lifting the data. $(e)$-$(h)$ are alternating algorithms that according to \cite{gssc} produce best results when initialized with the output of $(a)$, and so indirectly they also depend on \SSC. To measure performance we compute clustering error (fraction of misclassified points). When applicable (i.e., when no data is missing) we additionally compare against the following full-data approaches: BDR \cite{full2}, iPursuit \cite{full4}, LRR \cite{liu1, liu2}, LSR \cite{LSR}, LRSC \cite{LRSC}, L2Graph \cite{full1}, SCC \cite{scc}, SSC \cite{ssc}, and S3C \cite{full5}.
In the interest of reproducibility, all our code is included in the supplementary material. In the interest of fairness to other algorithms, whenever available we (a) used their code, (b) used their specified parameters, (c) did a sweep to find the best parameters, and (d) used reported results from the literature. Whenever there was a discrepancy, we reported their \textit{best} performance, be that from reports or from our experiments.



\subsection{Simulations}
Since \FSC\ is an entirely new approach to both, full and incomplete data, we present a thorough series of experiments to study its behavior as a function of the penalty parameter $\lambdaa$, the ambient dimension $\d$, the number of subspaces involved $\K$, their dimensions $\r$, the noise variance \phantomsection\label{sigmaDef}$\sigmaa{}^2$, the number of data points in each cluster $\n_\k$, and of course, the fraction of unobserved entries \phantomsection\label{pDef}$\p$. Unless otherwise stated, we use the following default settings: $\d=100$, $\K=4$, $\r=5$, $\sigmaa=0$, $\n_\k=20$, and $\p=0$. We run $30$ trials of each experiment, and show the average results of \FSC\ and all the algorithms above.

\sloppypar In all our simulations we first generate $\K$ matrices $\U{}_\k^\star \in \R{}^{\d \times \r}$ with i.i.d.~$\mathscr{N}(0,1)$ entries, to use as bases of the {\em true} subspaces. For each $\k$ we generate a matrix \phantomsection\label{bThetaDef}$\bTheta{}^\star_\k \in \R{}^{\r \times \n_\k}$, also with i.i.d.~$\mathscr{N}(0,1)$ entries, to use as coefficients of the columns in the $\k{}^{\rm th}$ subspace. We then form $\X$ as the concatenation $[\U_1{}^\star\bTheta{}^\star_1 \ \ \ \ \U_2{}^\star\bTheta{}^\star_2 \ \ \cdots \ \ \U{}_\K^\star\bTheta{}^\star_\K]$, plus a $\d\times\n$ noise matrix with i.i.d.~$\mathscr{N}(0,\sigmaa{}^2)$ entries. To create $\O$, we sample each entry independently with probability $1-\p$.

\begin{figure*}[t]
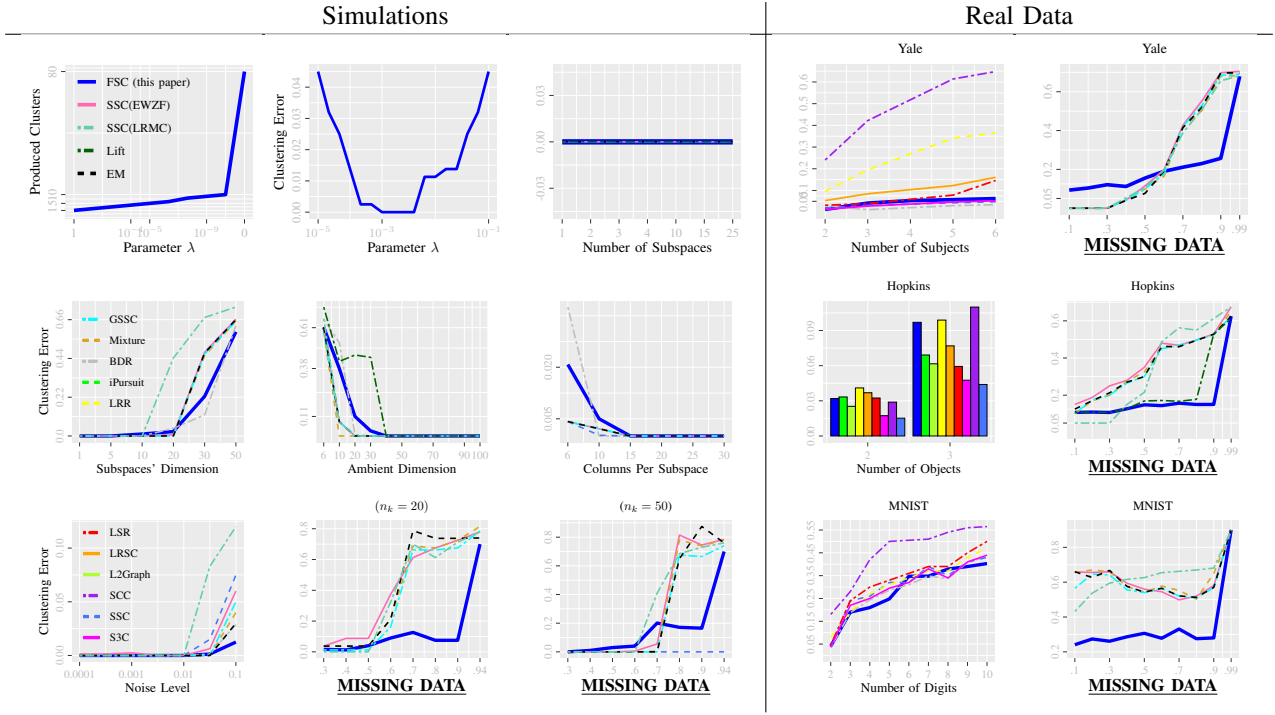

\centering
\begin{tabular}{ccc|cc}
\multicolumn{3}{c|}{Simulations} & \multicolumn{2}{c}{Real Data} \\ \hline
\Scale[.6]{\input{Figures/FiguresSims/LambdaClusters.tex}} & \hspace{-.5cm}
\Scale[.6]{
\begin{tikzpicture}[x=1pt,y=1pt]
\definecolor{fillColor}{RGB}{255,255,255}
\path[use as bounding box,fill=fillColor,fill opacity=0.00] (0,0) rectangle (151.77,144.54);
\begin{scope}
\path[clip] (  0.00,  0.00) rectangle (151.77,144.54);
\definecolor{drawColor}{RGB}{255,255,255}
\definecolor{fillColor}{RGB}{255,255,255}

\path[draw=drawColor,line width= 0.6pt,line join=round,line cap=round,fill=fillColor] (  0.00,  0.00) rectangle (151.77,144.54);
\end{scope}
\begin{scope}
\path[clip] ( 28.02, 28.18) rectangle (146.27,125.77);
\definecolor{fillColor}{gray}{0.92}

\path[fill=fillColor] ( 28.02, 28.18) rectangle (146.27,125.77);
\definecolor{drawColor}{RGB}{255,255,255}

\path[draw=drawColor,line width= 0.3pt,line join=round] ( 28.02, 42.47) --
	(146.27, 42.47);

\path[draw=drawColor,line width= 0.3pt,line join=round] ( 28.02, 62.19) --
	(146.27, 62.19);

\path[draw=drawColor,line width= 0.3pt,line join=round] ( 28.02, 81.90) --
	(146.27, 81.90);

\path[draw=drawColor,line width= 0.3pt,line join=round] ( 28.02,101.62) --
	(146.27,101.62);

\path[draw=drawColor,line width= 0.3pt,line join=round] ( 28.02,121.33) --
	(146.27,121.33);

\path[draw=drawColor,line width= 0.3pt,line join=round] ( 53.55, 28.18) --
	( 53.55,125.77);

\path[draw=drawColor,line width= 0.3pt,line join=round] (107.30, 28.18) --
	(107.30,125.77);

\path[draw=drawColor,line width= 0.6pt,line join=round] ( 28.02, 32.62) --
	(146.27, 32.62);

\path[draw=drawColor,line width= 0.6pt,line join=round] ( 28.02, 52.33) --
	(146.27, 52.33);

\path[draw=drawColor,line width= 0.6pt,line join=round] ( 28.02, 72.05) --
	(146.27, 72.05);

\path[draw=drawColor,line width= 0.6pt,line join=round] ( 28.02, 91.76) --
	(146.27, 91.76);

\path[draw=drawColor,line width= 0.6pt,line join=round] ( 28.02,111.48) --
	(146.27,111.48);

\path[draw=drawColor,line width= 0.6pt,line join=round] ( 33.40, 28.18) --
	( 33.40,125.77);

\path[draw=drawColor,line width= 0.6pt,line join=round] ( 73.71, 28.18) --
	( 73.71,125.77);

\path[draw=drawColor,line width= 0.6pt,line join=round] (140.89, 28.18) --
	(140.89,125.77);
\definecolor{drawColor}{RGB}{0,0,255}

\path[draw=drawColor,line width= 1.7pt,line join=round] ( 33.40,121.33) --
	( 40.12, 95.70) --
	( 46.84, 81.90) --
	( 53.55, 59.82) --
	( 60.27, 37.54) --
	( 66.99, 37.54) --
	( 73.71, 32.62) --
	( 80.43, 32.62) --
	( 87.15, 32.62) --
	( 93.86, 32.62) --
	(100.58, 54.89) --
	(107.30, 54.89) --
	(114.02, 59.82) --
	(120.74, 59.82) --
	(127.46, 81.90) --
	(134.17, 95.70) --
	(140.89,121.33);
\end{scope}
\begin{scope}
\path[clip] (  0.00,  0.00) rectangle (151.77,144.54);
\definecolor{drawColor}{RGB}{190,190,190}

\node[text=drawColor,rotate= 90.00,anchor=base,inner sep=0pt, outer sep=0pt, scale=  0.77] at ( 22.33, 32.62) {0.00};

\node[text=drawColor,rotate= 90.00,anchor=base,inner sep=0pt, outer sep=0pt, scale=  0.77] at ( 22.33, 52.33) {0.01};

\node[text=drawColor,rotate= 90.00,anchor=base,inner sep=0pt, outer sep=0pt, scale=  0.77] at ( 22.33, 72.05) {0.02};

\node[text=drawColor,rotate= 90.00,anchor=base,inner sep=0pt, outer sep=0pt, scale=  0.77] at ( 22.33, 91.76) {0.03};

\node[text=drawColor,rotate= 90.00,anchor=base,inner sep=0pt, outer sep=0pt, scale=  0.77] at ( 22.33,111.48) {0.04};
\end{scope}
\begin{scope}
\path[clip] (  0.00,  0.00) rectangle (151.77,144.54);
\definecolor{drawColor}{gray}{0.20}

\path[draw=drawColor,line width= 0.6pt,line join=round] ( 25.27, 32.62) --
	( 28.02, 32.62);

\path[draw=drawColor,line width= 0.6pt,line join=round] ( 25.27, 52.33) --
	( 28.02, 52.33);

\path[draw=drawColor,line width= 0.6pt,line join=round] ( 25.27, 72.05) --
	( 28.02, 72.05);

\path[draw=drawColor,line width= 0.6pt,line join=round] ( 25.27, 91.76) --
	( 28.02, 91.76);

\path[draw=drawColor,line width= 0.6pt,line join=round] ( 25.27,111.48) --
	( 28.02,111.48);
\end{scope}
\begin{scope}
\path[clip] (  0.00,  0.00) rectangle (151.77,144.54);
\definecolor{drawColor}{gray}{0.20}

\path[draw=drawColor,line width= 0.6pt,line join=round] ( 33.40, 25.43) --
	( 33.40, 28.18);

\path[draw=drawColor,line width= 0.6pt,line join=round] ( 73.71, 25.43) --
	( 73.71, 28.18);

\path[draw=drawColor,line width= 0.6pt,line join=round] (140.89, 25.43) --
	(140.89, 28.18);
\end{scope}
\begin{scope}
\path[clip] (  0.00,  0.00) rectangle (151.77,144.54);
\definecolor{drawColor}{RGB}{190,190,190}

\node[text=drawColor,anchor=base,inner sep=0pt, outer sep=0pt, scale=  0.78] at ( 33.40, 17.07) {$10^{-5}$};

\node[text=drawColor,anchor=base,inner sep=0pt, outer sep=0pt, scale=  0.78] at ( 73.71, 17.07) {$10^{-3}$};

\node[text=drawColor,anchor=base,inner sep=0pt, outer sep=0pt, scale=  0.78] at (140.89, 17.07) {$10^{-1}$};
\end{scope}
\begin{scope}
\path[clip] (  0.00,  0.00) rectangle (151.77,144.54);
\definecolor{drawColor}{RGB}{0,0,0}

\node[text=drawColor,anchor=base,inner sep=0pt, outer sep=0pt, scale=  0.91] at ( 87.15,  7.27) {Parameter $\lambda$};
\end{scope}
\begin{scope}
\path[clip] (  0.00,  0.00) rectangle (151.77,144.54);
\definecolor{drawColor}{RGB}{0,0,0}

\node[text=drawColor,rotate= 90.00,anchor=base,inner sep=0pt, outer sep=0pt, scale=  0.91] at ( 11.79, 76.97) {Clustering Error};
\end{scope}
\end{tikzpicture}} & \hspace{-.5cm}
\Scale[.6]{
\begin{tikzpicture}[x=1pt,y=1pt]
\definecolor{fillColor}{RGB}{255,255,255}
\path[use as bounding box,fill=fillColor,fill opacity=0.00] (0,0) rectangle (151.77,144.54);
\begin{scope}
\path[clip] (  0.00,  0.00) rectangle (151.77,144.54);
\definecolor{drawColor}{RGB}{255,255,255}
\definecolor{fillColor}{RGB}{255,255,255}

\path[draw=drawColor,line width= 0.6pt,line join=round,line cap=round,fill=fillColor] (  0.00,  0.00) rectangle (151.77,144.54);
\end{scope}
\begin{scope}
\path[clip] ( 28.02, 28.18) rectangle (146.27,125.77);
\definecolor{fillColor}{gray}{0.92}

\path[fill=fillColor] ( 28.02, 28.18) rectangle (146.27,125.77);
\definecolor{drawColor}{RGB}{255,255,255}

\path[draw=drawColor,line width= 0.3pt,line join=round] ( 28.02, 33.06) --
	(146.27, 33.06);

\path[draw=drawColor,line width= 0.3pt,line join=round] ( 28.02, 62.34) --
	(146.27, 62.34);

\path[draw=drawColor,line width= 0.3pt,line join=round] ( 28.02, 91.61) --
	(146.27, 91.61);

\path[draw=drawColor,line width= 0.3pt,line join=round] ( 28.02,120.89) --
	(146.27,120.89);

\path[draw=drawColor,line width= 0.3pt,line join=round] ( 42.36, 28.18) --
	( 42.36,125.77);

\path[draw=drawColor,line width= 0.3pt,line join=round] ( 60.27, 28.18) --
	( 60.27,125.77);

\path[draw=drawColor,line width= 0.3pt,line join=round] ( 78.19, 28.18) --
	( 78.19,125.77);

\path[draw=drawColor,line width= 0.3pt,line join=round] ( 96.10, 28.18) --
	( 96.10,125.77);

\path[draw=drawColor,line width= 0.3pt,line join=round] (114.02, 28.18) --
	(114.02,125.77);

\path[draw=drawColor,line width= 0.3pt,line join=round] (131.93, 28.18) --
	(131.93,125.77);

\path[draw=drawColor,line width= 0.6pt,line join=round] ( 28.02, 47.70) --
	(146.27, 47.70);

\path[draw=drawColor,line width= 0.6pt,line join=round] ( 28.02, 76.97) --
	(146.27, 76.97);

\path[draw=drawColor,line width= 0.6pt,line join=round] ( 28.02,106.25) --
	(146.27,106.25);

\path[draw=drawColor,line width= 0.6pt,line join=round] ( 33.40, 28.18) --
	( 33.40,125.77);

\path[draw=drawColor,line width= 0.6pt,line join=round] ( 51.31, 28.18) --
	( 51.31,125.77);

\path[draw=drawColor,line width= 0.6pt,line join=round] ( 69.23, 28.18) --
	( 69.23,125.77);

\path[draw=drawColor,line width= 0.6pt,line join=round] ( 87.15, 28.18) --
	( 87.15,125.77);

\path[draw=drawColor,line width= 0.6pt,line join=round] (105.06, 28.18) --
	(105.06,125.77);

\path[draw=drawColor,line width= 0.6pt,line join=round] (122.98, 28.18) --
	(122.98,125.77);

\path[draw=drawColor,line width= 0.6pt,line join=round] (140.89, 28.18) --
	(140.89,125.77);
\definecolor{drawColor}{RGB}{0,0,255}

\path[draw=drawColor,line width= 3.4pt,line join=round] ( 33.40, 76.97) --
	( 51.31, 76.97) --
	( 69.23, 76.97) --
	( 87.15, 76.97) --
	(105.06, 76.97) --
	(122.98, 76.97) --
	(140.89, 76.97);
\definecolor{drawColor}{RGB}{190,190,190}

\path[draw=drawColor,line width= 1.1pt,dash pattern=on 2pt off 2pt on 6pt off 2pt ,line join=round] ( 33.40, 76.97) --
	( 51.31, 76.97) --
	( 69.23, 76.97) --
	( 87.15, 76.97) --
	(105.06, 76.97) --
	(122.98, 76.97) --
	(140.89, 76.97);

\path[] ( 33.40, 76.97) --
	( 51.31, 76.97) --
	( 69.23, 76.97) --
	( 87.15, 76.97) --
	(105.06, 76.97) --
	(122.98, 76.97) --
	(140.89, 76.97);
\definecolor{drawColor}{RGB}{72,118,255}

\path[draw=drawColor,line width= 1.1pt,dash pattern=on 4pt off 4pt ,line join=round] ( 33.40, 76.97) --
	( 51.31, 76.97) --
	( 69.23, 76.97) --
	( 87.15, 76.97) --
	(105.06, 76.97) --
	(122.98, 76.97) --
	(140.89, 76.97);

\path[] ( 33.40, 76.97) --
	( 51.31, 76.97) --
	( 69.23, 76.97) --
	( 87.15, 76.97) --
	(105.06, 76.97) --
	(122.98, 76.97) --
	(140.89, 76.97);

\path[] ( 33.40, 76.97) --
	( 51.31, 76.97) --
	( 69.23, 76.97) --
	( 87.15, 76.97) --
	(105.06, 76.97) --
	(122.98, 76.97) --
	(140.89, 76.97);

\path[] ( 33.40, 76.97) --
	( 51.31, 76.97) --
	( 69.23, 76.97) --
	( 87.15, 76.97) --
	(105.06, 76.97) --
	(122.98, 76.97) --
	(140.89, 76.97);

\path[] ( 33.40, 76.97) --
	( 51.31, 76.97) --
	( 69.23, 76.97) --
	( 87.15, 76.97) --
	(105.06, 76.97) --
	(122.98, 76.97) --
	(140.89, 76.97);
\definecolor{drawColor}{RGB}{255,105,180}

\path[draw=drawColor,line width= 1.1pt,line join=round] ( 33.40, 76.97) --
	( 51.31, 76.97) --
	( 69.23, 76.97) --
	( 87.15, 76.97) --
	(105.06, 76.97) --
	(122.98, 76.97) --
	(140.89, 76.97);
\definecolor{drawColor}{RGB}{218,165,32}

\path[draw=drawColor,line width= 1.1pt,dash pattern=on 4pt off 4pt ,line join=round] ( 33.40, 76.97) --
	( 51.31, 76.97) --
	( 69.23, 76.97) --
	( 87.15, 76.97) --
	(105.06, 76.97) --
	(122.98, 76.97) --
	(140.89, 76.97);
\definecolor{drawColor}{RGB}{0,255,255}

\path[draw=drawColor,line width= 1.1pt,dash pattern=on 2pt off 2pt on 6pt off 2pt ,line join=round] ( 33.40, 76.97) --
	( 51.31, 76.97) --
	( 69.23, 76.97) --
	( 87.15, 76.97) --
	(105.06, 76.97) --
	(122.98, 76.97) --
	(140.89, 76.97);
\definecolor{drawColor}{RGB}{0,0,0}

\path[draw=drawColor,line width= 1.1pt,dash pattern=on 4pt off 4pt ,line join=round] ( 33.40, 76.97) --
	( 51.31, 76.97) --
	( 69.23, 76.97) --
	( 87.15, 76.97) --
	(105.06, 76.97) --
	(122.98, 76.97) --
	(140.89, 76.97);
\definecolor{drawColor}{RGB}{102,205,170}

\path[draw=drawColor,line width= 1.1pt,dash pattern=on 2pt off 2pt on 6pt off 2pt ,line join=round] ( 33.40, 76.97) --
	( 51.31, 76.97) --
	( 69.23, 76.97) --
	( 87.15, 76.97) --
	(105.06, 76.97) --
	(122.98, 76.97) --
	(140.89, 76.97);
\definecolor{drawColor}{RGB}{0,100,0}

\path[draw=drawColor,line width= 1.1pt,dash pattern=on 2pt off 2pt on 6pt off 2pt ,line join=round] ( 33.40, 76.97) --
	( 51.31, 76.97) --
	( 69.23, 76.97) --
	( 87.15, 76.97) --
	(105.06, 76.97) --
	(122.98, 76.97) --
	(140.89, 76.97);

\path[] ( 33.40, 76.97) --
	( 51.31, 76.97) --
	( 69.23, 76.97) --
	( 87.15, 76.97) --
	(105.06, 76.97) --
	(122.98, 76.97) --
	(140.89, 76.97);

\path[] ( 33.40, 76.97) --
	( 51.31, 76.97) --
	( 69.23, 76.97) --
	( 87.15, 76.97) --
	(105.06, 76.97) --
	(122.98, 76.97) --
	(140.89, 76.97);
\end{scope}
\begin{scope}
\path[clip] (  0.00,  0.00) rectangle (151.77,144.54);
\definecolor{drawColor}{RGB}{190,190,190}

\node[text=drawColor,rotate= 90.00,anchor=base,inner sep=0pt, outer sep=0pt, scale=  0.77] at ( 22.33, 47.70) {-0.03};

\node[text=drawColor,rotate= 90.00,anchor=base,inner sep=0pt, outer sep=0pt, scale=  0.77] at ( 22.33, 76.97) {0.00};

\node[text=drawColor,rotate= 90.00,anchor=base,inner sep=0pt, outer sep=0pt, scale=  0.77] at ( 22.33,106.25) {0.03};
\end{scope}
\begin{scope}
\path[clip] (  0.00,  0.00) rectangle (151.77,144.54);
\definecolor{drawColor}{gray}{0.20}

\path[draw=drawColor,line width= 0.6pt,line join=round] ( 25.27, 47.70) --
	( 28.02, 47.70);

\path[draw=drawColor,line width= 0.6pt,line join=round] ( 25.27, 76.97) --
	( 28.02, 76.97);

\path[draw=drawColor,line width= 0.6pt,line join=round] ( 25.27,106.25) --
	( 28.02,106.25);
\end{scope}
\begin{scope}
\path[clip] (  0.00,  0.00) rectangle (151.77,144.54);
\definecolor{drawColor}{gray}{0.20}

\path[draw=drawColor,line width= 0.6pt,line join=round] ( 33.40, 25.43) --
	( 33.40, 28.18);

\path[draw=drawColor,line width= 0.6pt,line join=round] ( 51.31, 25.43) --
	( 51.31, 28.18);

\path[draw=drawColor,line width= 0.6pt,line join=round] ( 69.23, 25.43) --
	( 69.23, 28.18);

\path[draw=drawColor,line width= 0.6pt,line join=round] ( 87.15, 25.43) --
	( 87.15, 28.18);

\path[draw=drawColor,line width= 0.6pt,line join=round] (105.06, 25.43) --
	(105.06, 28.18);

\path[draw=drawColor,line width= 0.6pt,line join=round] (122.98, 25.43) --
	(122.98, 28.18);

\path[draw=drawColor,line width= 0.6pt,line join=round] (140.89, 25.43) --
	(140.89, 28.18);
\end{scope}
\begin{scope}
\path[clip] (  0.00,  0.00) rectangle (151.77,144.54);
\definecolor{drawColor}{RGB}{190,190,190}

\node[text=drawColor,anchor=base,inner sep=0pt, outer sep=0pt, scale=  0.78] at ( 33.40, 17.07) {1};

\node[text=drawColor,anchor=base,inner sep=0pt, outer sep=0pt, scale=  0.78] at ( 51.31, 17.07) {2};

\node[text=drawColor,anchor=base,inner sep=0pt, outer sep=0pt, scale=  0.78] at ( 69.23, 17.07) {3};

\node[text=drawColor,anchor=base,inner sep=0pt, outer sep=0pt, scale=  0.78] at ( 87.15, 17.07) {4};

\node[text=drawColor,anchor=base,inner sep=0pt, outer sep=0pt, scale=  0.78] at (105.06, 17.07) {10};

\node[text=drawColor,anchor=base,inner sep=0pt, outer sep=0pt, scale=  0.78] at (122.98, 17.07) {15};

\node[text=drawColor,anchor=base,inner sep=0pt, outer sep=0pt, scale=  0.78] at (140.89, 17.07) {25};
\end{scope}
\begin{scope}
\path[clip] (  0.00,  0.00) rectangle (151.77,144.54);
\definecolor{drawColor}{RGB}{0,0,0}

\node[text=drawColor,anchor=base,inner sep=0pt, outer sep=0pt, scale=  0.91] at ( 87.15,  7.27) {Number of Subspaces};
\end{scope}
\end{tikzpicture}} & \hspace{-.25cm}
\Scale[.6]{
\begin{tikzpicture}[x=1pt,y=1pt]
\definecolor{fillColor}{RGB}{255,255,255}
\path[use as bounding box,fill=fillColor,fill opacity=0.00] (0,0) rectangle (151.77,144.54);
\begin{scope}
\path[clip] (  0.00,  0.00) rectangle (151.77,144.54);
\definecolor{drawColor}{RGB}{255,255,255}
\definecolor{fillColor}{RGB}{255,255,255}

\path[draw=drawColor,line width= 0.6pt,line join=round,line cap=round,fill=fillColor] (  0.00,  0.00) rectangle (151.77,144.54);
\end{scope}
\begin{scope}
\path[clip] ( 28.02, 28.18) rectangle (146.27,125.77);
\definecolor{fillColor}{gray}{0.92}

\path[fill=fillColor] ( 28.02, 28.18) rectangle (146.27,125.77);
\definecolor{drawColor}{RGB}{255,255,255}

\path[draw=drawColor,line width= 0.3pt,line join=round] ( 28.02,121.61) --
	(146.27,121.61);

\path[draw=drawColor,line width= 0.3pt,line join=round] ( 28.02,107.92) --
	(146.27,107.92);

\path[draw=drawColor,line width= 0.3pt,line join=round] ( 28.02, 94.22) --
	(146.27, 94.22);

\path[draw=drawColor,line width= 0.3pt,line join=round] ( 28.02, 80.53) --
	(146.27, 80.53);

\path[draw=drawColor,line width= 0.3pt,line join=round] ( 28.02, 66.84) --
	(146.27, 66.84);

\path[draw=drawColor,line width= 0.3pt,line join=round] ( 28.02, 53.15) --
	(146.27, 53.15);

\path[draw=drawColor,line width= 0.3pt,line join=round] ( 28.02, 42.88) --
	(146.27, 42.88);

\path[draw=drawColor,line width= 0.3pt,line join=round] ( 46.84, 28.18) --
	( 46.84,125.77);

\path[draw=drawColor,line width= 0.3pt,line join=round] ( 73.71, 28.18) --
	( 73.71,125.77);

\path[draw=drawColor,line width= 0.3pt,line join=round] (100.58, 28.18) --
	(100.58,125.77);

\path[draw=drawColor,line width= 0.3pt,line join=round] (127.46, 28.18) --
	(127.46,125.77);

\path[draw=drawColor,line width= 0.6pt,line join=round] ( 28.02,114.76) --
	(146.27,114.76);

\path[draw=drawColor,line width= 0.6pt,line join=round] ( 28.02,101.07) --
	(146.27,101.07);

\path[draw=drawColor,line width= 0.6pt,line join=round] ( 28.02, 87.38) --
	(146.27, 87.38);

\path[draw=drawColor,line width= 0.6pt,line join=round] ( 28.02, 73.69) --
	(146.27, 73.69);

\path[draw=drawColor,line width= 0.6pt,line join=round] ( 28.02, 60.00) --
	(146.27, 60.00);

\path[draw=drawColor,line width= 0.6pt,line join=round] ( 28.02, 46.31) --
	(146.27, 46.31);

\path[draw=drawColor,line width= 0.6pt,line join=round] ( 28.02, 39.46) --
	(146.27, 39.46);

\path[draw=drawColor,line width= 0.6pt,line join=round] ( 33.40, 28.18) --
	( 33.40,125.77);

\path[draw=drawColor,line width= 0.6pt,line join=round] ( 60.27, 28.18) --
	( 60.27,125.77);

\path[draw=drawColor,line width= 0.6pt,line join=round] ( 87.15, 28.18) --
	( 87.15,125.77);

\path[draw=drawColor,line width= 0.6pt,line join=round] (114.02, 28.18) --
	(114.02,125.77);

\path[draw=drawColor,line width= 0.6pt,line join=round] (140.89, 28.18) --
	(140.89,125.77);
\definecolor{drawColor}{RGB}{0,0,255}

\path[draw=drawColor,line width= 2.3pt,line join=round] ( 33.40, 34.38) --
	( 60.27, 38.34) --
	( 87.15, 39.89) --
	(114.02, 40.49) --
	(140.89, 41.17);
\definecolor{drawColor}{RGB}{190,190,190}

\path[draw=drawColor,line width= 1.1pt,dash pattern=on 2pt off 2pt on 6pt off 2pt ,line join=round] ( 33.40, 36.68) --
	( 60.27, 34.19) --
	( 87.15, 35.46) --
	(114.02, 36.72) --
	(140.89, 37.31);
\definecolor{drawColor}{RGB}{160,32,240}

\path[draw=drawColor,line width= 1.1pt,dash pattern=on 2pt off 2pt on 6pt off 2pt ,line join=round] ( 33.40, 65.50) --
	( 60.27, 90.38) --
	( 87.15,103.50) --
	(114.02,116.62) --
	(140.89,121.33);
\definecolor{drawColor}{RGB}{72,118,255}

\path[draw=drawColor,line width= 1.1pt,dash pattern=on 4pt off 4pt ,line join=round] ( 33.40, 35.16) --
	( 60.27, 36.86) --
	( 87.15, 37.69) --
	(114.02, 38.52) --
	(140.89, 39.21);
\definecolor{drawColor}{RGB}{255,255,0}

\path[draw=drawColor,line width= 1.1pt,dash pattern=on 4pt off 4pt ,line join=round] ( 33.40, 45.65) --
	( 60.27, 59.34) --
	( 87.15, 69.36) --
	(114.02, 79.38) --
	(140.89, 82.57);
\definecolor{drawColor}{RGB}{255,0,0}

\path[draw=drawColor,line width= 1.1pt,dash pattern=on 2pt off 2pt on 6pt off 2pt ,line join=round] ( 33.40, 36.94) --
	( 60.27, 38.04) --
	( 87.15, 40.70) --
	(114.02, 43.36) --
	(140.89, 52.62);
\definecolor{drawColor}{RGB}{255,0,255}

\path[draw=drawColor,line width= 1.1pt,line join=round] ( 33.40, 34.38) --
	( 60.27, 36.44) --
	( 87.15, 37.72) --
	(114.02, 39.00) --
	(140.89, 39.78);
\definecolor{drawColor}{RGB}{255,165,0}

\path[draw=drawColor,line width= 1.1pt,line join=round] ( 33.40, 39.90) --
	( 60.27, 44.21) --
	( 87.15, 46.79) --
	(114.02, 49.37) --
	(140.89, 54.60);

\path[] ( 33.40, 32.62) --
	( 60.27, 32.62) --
	( 87.15, 32.62) --
	(114.02, 32.62) --
	(140.89, 32.62);

\path[] ( 33.40, 32.62) --
	( 60.27, 32.62) --
	( 87.15, 32.62) --
	(114.02, 32.62) --
	(140.89, 32.62);

\path[] ( 33.40, 32.62) --
	( 60.27, 32.62) --
	( 87.15, 32.62) --
	(114.02, 32.62) --
	(140.89, 32.62);

\path[] ( 33.40, 32.62) --
	( 60.27, 32.62) --
	( 87.15, 32.62) --
	(114.02, 32.62) --
	(140.89, 32.62);

\path[] ( 33.40, 32.62) --
	( 60.27, 32.62) --
	( 87.15, 32.62) --
	(114.02, 32.62) --
	(140.89, 32.62);

\path[] ( 33.40, 32.62) --
	( 60.27, 32.62) --
	( 87.15, 32.62) --
	(114.02, 32.62) --
	(140.89, 32.62);

\path[] ( 33.40, 32.62) --
	( 60.27, 32.62) --
	( 87.15, 32.62) --
	(114.02, 32.62) --
	(140.89, 32.62);

\path[] ( 33.40, 32.62) --
	( 60.27, 32.62) --
	( 87.15, 32.62) --
	(114.02, 32.62) --
	(140.89, 32.62);
\end{scope}
\begin{scope}
\path[clip] (  0.00,  0.00) rectangle (151.77,144.54);
\definecolor{drawColor}{RGB}{190,190,190}

\node[text=drawColor,rotate= 90.00,anchor=base,inner sep=0pt, outer sep=0pt, scale=  0.77] at ( 22.33,114.76) {0.6};

\node[text=drawColor,rotate= 90.00,anchor=base,inner sep=0pt, outer sep=0pt, scale=  0.77] at ( 22.33,101.07) {0.5};

\node[text=drawColor,rotate= 90.00,anchor=base,inner sep=0pt, outer sep=0pt, scale=  0.77] at ( 22.33, 87.38) {0.4};

\node[text=drawColor,rotate= 90.00,anchor=base,inner sep=0pt, outer sep=0pt, scale=  0.77] at ( 22.33, 73.69) {0.3};

\node[text=drawColor,rotate= 90.00,anchor=base,inner sep=0pt, outer sep=0pt, scale=  0.77] at ( 22.33, 60.00) {0.2};

\node[text=drawColor,rotate= 90.00,anchor=base,inner sep=0pt, outer sep=0pt, scale=  0.77] at ( 22.33, 46.31) {0.1};

\node[text=drawColor,rotate= 90.00,anchor=base,inner sep=0pt, outer sep=0pt, scale=  0.77] at ( 22.33, 39.46) {0.05};
\end{scope}
\begin{scope}
\path[clip] (  0.00,  0.00) rectangle (151.77,144.54);
\definecolor{drawColor}{gray}{0.20}

\path[draw=drawColor,line width= 0.6pt,line join=round] ( 25.27,114.76) --
	( 28.02,114.76);

\path[draw=drawColor,line width= 0.6pt,line join=round] ( 25.27,101.07) --
	( 28.02,101.07);

\path[draw=drawColor,line width= 0.6pt,line join=round] ( 25.27, 87.38) --
	( 28.02, 87.38);

\path[draw=drawColor,line width= 0.6pt,line join=round] ( 25.27, 73.69) --
	( 28.02, 73.69);

\path[draw=drawColor,line width= 0.6pt,line join=round] ( 25.27, 60.00) --
	( 28.02, 60.00);

\path[draw=drawColor,line width= 0.6pt,line join=round] ( 25.27, 46.31) --
	( 28.02, 46.31);

\path[draw=drawColor,line width= 0.6pt,line join=round] ( 25.27, 39.46) --
	( 28.02, 39.46);
\end{scope}
\begin{scope}
\path[clip] (  0.00,  0.00) rectangle (151.77,144.54);
\definecolor{drawColor}{gray}{0.20}

\path[draw=drawColor,line width= 0.6pt,line join=round] ( 33.40, 25.43) --
	( 33.40, 28.18);

\path[draw=drawColor,line width= 0.6pt,line join=round] ( 60.27, 25.43) --
	( 60.27, 28.18);

\path[draw=drawColor,line width= 0.6pt,line join=round] ( 87.15, 25.43) --
	( 87.15, 28.18);

\path[draw=drawColor,line width= 0.6pt,line join=round] (114.02, 25.43) --
	(114.02, 28.18);

\path[draw=drawColor,line width= 0.6pt,line join=round] (140.89, 25.43) --
	(140.89, 28.18);
\end{scope}
\begin{scope}
\path[clip] (  0.00,  0.00) rectangle (151.77,144.54);
\definecolor{drawColor}{RGB}{190,190,190}

\node[text=drawColor,anchor=base,inner sep=0pt, outer sep=0pt, scale=  0.78] at ( 33.40, 17.07) {2};

\node[text=drawColor,anchor=base,inner sep=0pt, outer sep=0pt, scale=  0.78] at ( 60.27, 17.07) {3};

\node[text=drawColor,anchor=base,inner sep=0pt, outer sep=0pt, scale=  0.78] at ( 87.15, 17.07) {4};

\node[text=drawColor,anchor=base,inner sep=0pt, outer sep=0pt, scale=  0.78] at (114.02, 17.07) {5};

\node[text=drawColor,anchor=base,inner sep=0pt, outer sep=0pt, scale=  0.78] at (140.89, 17.07) {6};
\end{scope}
\begin{scope}
\path[clip] (  0.00,  0.00) rectangle (151.77,144.54);
\definecolor{drawColor}{RGB}{0,0,0}

\node[text=drawColor,anchor=base,inner sep=0pt, outer sep=0pt, scale=  0.91] at ( 87.15,  7.27) {Number of Subjects};
\end{scope}
\begin{scope}
\path[clip] (  0.00,  0.00) rectangle (151.77,144.54);
\definecolor{drawColor}{RGB}{0,0,0}

\node[text=drawColor,anchor=base,inner sep=0pt, outer sep=0pt, scale=  0.88] at ( 87.15,132.98) {Yale};
\end{scope}
\end{tikzpicture}} & \hspace{-.5cm}
\Scale[.6]{
\begin{tikzpicture}[x=1pt,y=1pt]
\definecolor{fillColor}{RGB}{255,255,255}
\path[use as bounding box,fill=fillColor,fill opacity=0.00] (0,0) rectangle (151.77,144.54);
\begin{scope}
\path[clip] (  0.00,  0.00) rectangle (151.77,144.54);
\definecolor{drawColor}{RGB}{255,255,255}
\definecolor{fillColor}{RGB}{255,255,255}

\path[draw=drawColor,line width= 0.6pt,line join=round,line cap=round,fill=fillColor] (  0.00, -0.00) rectangle (151.77,144.54);
\end{scope}
\begin{scope}
\path[clip] ( 28.02, 30.80) rectangle (146.27,125.77);
\definecolor{fillColor}{gray}{0.92}

\path[fill=fillColor] ( 28.02, 30.80) rectangle (146.27,125.77);
\definecolor{drawColor}{RGB}{255,255,255}

\path[draw=drawColor,line width= 0.3pt,line join=round] ( 28.02,120.84) --
	(146.27,120.84);

\path[draw=drawColor,line width= 0.3pt,line join=round] ( 28.02, 96.35) --
	(146.27, 96.35);

\path[draw=drawColor,line width= 0.3pt,line join=round] ( 28.02, 71.86) --
	(146.27, 71.86);

\path[draw=drawColor,line width= 0.3pt,line join=round] ( 28.02, 50.43) --
	(146.27, 50.43);

\path[draw=drawColor,line width= 0.3pt,line join=round] ( 45.34, 30.80) --
	( 45.34,125.77);

\path[draw=drawColor,line width= 0.3pt,line join=round] ( 69.23, 30.80) --
	( 69.23,125.77);

\path[draw=drawColor,line width= 0.3pt,line join=round] ( 93.12, 30.80) --
	( 93.12,125.77);

\path[draw=drawColor,line width= 0.3pt,line join=round] (117.00, 30.80) --
	(117.00,125.77);

\path[draw=drawColor,line width= 0.3pt,line join=round] (134.92, 30.80) --
	(134.92,125.77);

\path[draw=drawColor,line width= 0.6pt,line join=round] ( 28.02,108.59) --
	(146.27,108.59);

\path[draw=drawColor,line width= 0.6pt,line join=round] ( 28.02, 84.10) --
	(146.27, 84.10);

\path[draw=drawColor,line width= 0.6pt,line join=round] ( 28.02, 59.61) --
	(146.27, 59.61);

\path[draw=drawColor,line width= 0.6pt,line join=round] ( 28.02, 41.24) --
	(146.27, 41.24);

\path[draw=drawColor,line width= 0.6pt,line join=round] ( 33.40, 30.80) --
	( 33.40,125.77);

\path[draw=drawColor,line width= 0.6pt,line join=round] ( 57.29, 30.80) --
	( 57.29,125.77);

\path[draw=drawColor,line width= 0.6pt,line join=round] ( 81.17, 30.80) --
	( 81.17,125.77);

\path[draw=drawColor,line width= 0.6pt,line join=round] (105.06, 30.80) --
	(105.06,125.77);

\path[draw=drawColor,line width= 0.6pt,line join=round] (128.95, 30.80) --
	(128.95,125.77);

\path[draw=drawColor,line width= 0.6pt,line join=round] (140.89, 30.80) --
	(140.89,125.77);
\definecolor{drawColor}{RGB}{0,0,255}

\path[draw=drawColor,line width= 2.3pt,line join=round] ( 33.40, 46.53) --
	( 45.34, 47.85) --
	( 57.29, 49.94) --
	( 69.23, 48.83) --
	( 81.17, 54.10) --
	( 93.12, 58.40) --
	(105.06, 60.98) --
	(117.00, 63.28) --
	(128.95, 66.59) --
	(140.89,118.39);

\path[] ( 33.40, 35.12) --
	( 45.34, 35.12) --
	( 57.29, 35.12) --
	( 69.23, 35.12) --
	( 81.17, 35.12) --
	( 93.12, 35.12) --
	(105.06, 35.12) --
	(117.00, 35.12) --
	(128.95, 35.12) --
	(140.89, 35.12);

\path[] ( 33.40, 35.12) --
	( 45.34, 35.12) --
	( 57.29, 35.12) --
	( 69.23, 35.12) --
	( 81.17, 35.12) --
	( 93.12, 35.12) --
	(105.06, 35.12) --
	(117.00, 35.12) --
	(128.95, 35.12) --
	(140.89, 35.12);

\path[] ( 33.40, 35.12) --
	( 45.34, 35.12) --
	( 57.29, 35.12) --
	( 69.23, 35.12) --
	( 81.17, 35.12) --
	( 93.12, 35.12) --
	(105.06, 35.12) --
	(117.00, 35.12) --
	(128.95, 35.12) --
	(140.89, 35.12);

\path[] ( 33.40, 35.12) --
	( 45.34, 35.12) --
	( 57.29, 35.12) --
	( 69.23, 35.12) --
	( 81.17, 35.12) --
	( 93.12, 35.12) --
	(105.06, 35.12) --
	(117.00, 35.12) --
	(128.95, 35.12) --
	(140.89, 35.12);

\path[] ( 33.40, 35.12) --
	( 45.34, 35.12) --
	( 57.29, 35.12) --
	( 69.23, 35.12) --
	( 81.17, 35.12) --
	( 93.12, 35.12) --
	(105.06, 35.12) --
	(117.00, 35.12) --
	(128.95, 35.12) --
	(140.89, 35.12);

\path[] ( 33.40, 35.12) --
	( 45.34, 35.12) --
	( 57.29, 35.12) --
	( 69.23, 35.12) --
	( 81.17, 35.12) --
	( 93.12, 35.12) --
	(105.06, 35.12) --
	(117.00, 35.12) --
	(128.95, 35.12) --
	(140.89, 35.12);

\path[] ( 33.40, 35.12) --
	( 45.34, 35.12) --
	( 57.29, 35.12) --
	( 69.23, 35.12) --
	( 81.17, 35.12) --
	( 93.12, 35.12) --
	(105.06, 35.12) --
	(117.00, 35.12) --
	(128.95, 35.12) --
	(140.89, 35.12);
\definecolor{drawColor}{RGB}{255,105,180}

\path[draw=drawColor,line width= 1.1pt,line join=round] ( 33.40, 35.12) --
	( 45.34, 35.12) --
	( 57.29, 35.12) --
	( 69.23, 40.23) --
	( 81.17, 49.15) --
	( 93.12, 59.35) --
	(105.06, 87.42) --
	(117.00,102.73) --
	(128.95,120.58) --
	(140.89,121.45);
\definecolor{drawColor}{RGB}{218,165,32}

\path[draw=drawColor,line width= 1.1pt,dash pattern=on 4pt off 4pt ,line join=round] ( 33.40, 35.12) --
	( 45.34, 35.12) --
	( 57.29, 35.12) --
	( 69.23, 40.17) --
	( 81.17, 47.24) --
	( 93.12, 55.20) --
	(105.06, 86.18) --
	(117.00, 96.72) --
	(128.95,118.39) --
	(140.89,118.88);
\definecolor{drawColor}{RGB}{0,255,255}

\path[draw=drawColor,line width= 1.1pt,dash pattern=on 2pt off 2pt on 6pt off 2pt ,line join=round] ( 33.40, 35.12) --
	( 45.34, 35.12) --
	( 57.29, 35.12) --
	( 69.23, 40.23) --
	( 81.17, 47.30) --
	( 93.12, 56.79) --
	(105.06, 86.80) --
	(117.00, 99.90) --
	(128.95,119.00) --
	(140.89,119.74);
\definecolor{drawColor}{RGB}{0,0,0}

\path[draw=drawColor,line width= 1.1pt,dash pattern=on 4pt off 4pt ,line join=round] ( 33.40, 35.12) --
	( 45.34, 35.12) --
	( 57.29, 35.12) --
	( 69.23, 40.02) --
	( 81.17, 44.46) --
	( 93.12, 58.51) --
	(105.06, 85.94) --
	(117.00, 98.67) --
	(128.95,120.47) --
	(140.89,120.47);
\definecolor{drawColor}{RGB}{102,205,170}

\path[draw=drawColor,line width= 1.1pt,dash pattern=on 2pt off 2pt on 6pt off 2pt ,line join=round] ( 33.40, 35.12) --
	( 45.34, 35.12) --
	( 57.29, 35.12) --
	( 69.23, 41.61) --
	( 81.17, 48.11) --
	( 93.12, 59.00) --
	(105.06, 82.63) --
	(117.00, 97.33) --
	(128.95,115.57) --
	(140.89,118.64);

\path[] ( 33.40, 35.12) --
	( 45.34, 35.12) --
	( 57.29, 35.12) --
	( 69.23, 35.12) --
	( 81.17, 35.12) --
	( 93.12, 35.12) --
	(105.06, 35.12) --
	(117.00, 35.12) --
	(128.95, 35.12) --
	(140.89, 35.12);

\path[] ( 33.40, 35.12) --
	( 45.34, 35.12) --
	( 57.29, 35.12) --
	( 69.23, 35.12) --
	( 81.17, 35.12) --
	( 93.12, 35.12) --
	(105.06, 35.12) --
	(117.00, 35.12) --
	(128.95, 35.12) --
	(140.89, 35.12);

\path[] ( 33.40, 35.12) --
	( 45.34, 35.12) --
	( 57.29, 35.12) --
	( 69.23, 35.12) --
	( 81.17, 35.12) --
	( 93.12, 35.12) --
	(105.06, 35.12) --
	(117.00, 35.12) --
	(128.95, 35.12) --
	(140.89, 35.12);
\end{scope}
\begin{scope}
\path[clip] (  0.00,  0.00) rectangle (151.77,144.54);
\definecolor{drawColor}{RGB}{190,190,190}

\node[text=drawColor,rotate= 90.00,anchor=base,inner sep=0pt, outer sep=0pt, scale=  0.77] at ( 22.33,108.59) {0.6};

\node[text=drawColor,rotate= 90.00,anchor=base,inner sep=0pt, outer sep=0pt, scale=  0.77] at ( 22.33, 84.10) {0.4};

\node[text=drawColor,rotate= 90.00,anchor=base,inner sep=0pt, outer sep=0pt, scale=  0.77] at ( 22.33, 59.61) {0.2};

\node[text=drawColor,rotate= 90.00,anchor=base,inner sep=0pt, outer sep=0pt, scale=  0.77] at ( 22.33, 41.24) {0.05};
\end{scope}
\begin{scope}
\path[clip] (  0.00,  0.00) rectangle (151.77,144.54);
\definecolor{drawColor}{gray}{0.20}

\path[draw=drawColor,line width= 0.6pt,line join=round] ( 25.27,108.59) --
	( 28.02,108.59);

\path[draw=drawColor,line width= 0.6pt,line join=round] ( 25.27, 84.10) --
	( 28.02, 84.10);

\path[draw=drawColor,line width= 0.6pt,line join=round] ( 25.27, 59.61) --
	( 28.02, 59.61);

\path[draw=drawColor,line width= 0.6pt,line join=round] ( 25.27, 41.24) --
	( 28.02, 41.24);
\end{scope}
\begin{scope}
\path[clip] (  0.00,  0.00) rectangle (151.77,144.54);
\definecolor{drawColor}{gray}{0.20}

\path[draw=drawColor,line width= 0.6pt,line join=round] ( 33.40, 28.05) --
	( 33.40, 30.80);

\path[draw=drawColor,line width= 0.6pt,line join=round] ( 57.29, 28.05) --
	( 57.29, 30.80);

\path[draw=drawColor,line width= 0.6pt,line join=round] ( 81.17, 28.05) --
	( 81.17, 30.80);

\path[draw=drawColor,line width= 0.6pt,line join=round] (105.06, 28.05) --
	(105.06, 30.80);

\path[draw=drawColor,line width= 0.6pt,line join=round] (128.95, 28.05) --
	(128.95, 30.80);

\path[draw=drawColor,line width= 0.6pt,line join=round] (140.89, 28.05) --
	(140.89, 30.80);
\end{scope}
\begin{scope}
\path[clip] (  0.00,  0.00) rectangle (151.77,144.54);
\definecolor{drawColor}{RGB}{190,190,190}

\node[text=drawColor,anchor=base,inner sep=0pt, outer sep=0pt, scale=  0.78] at ( 33.40, 19.70) {.1};

\node[text=drawColor,anchor=base,inner sep=0pt, outer sep=0pt, scale=  0.78] at ( 57.29, 19.70) {.3};

\node[text=drawColor,anchor=base,inner sep=0pt, outer sep=0pt, scale=  0.78] at ( 81.17, 19.70) {.5};

\node[text=drawColor,anchor=base,inner sep=0pt, outer sep=0pt, scale=  0.78] at (105.06, 19.70) {.7};

\node[text=drawColor,anchor=base,inner sep=0pt, outer sep=0pt, scale=  0.78] at (128.95, 19.70) {.9};

\node[text=drawColor,anchor=base,inner sep=0pt, outer sep=0pt, scale=  0.78] at (140.89, 19.70) {.99};
\end{scope}
\begin{scope}
\path[clip] (  0.00,  0.00) rectangle (151.77,144.54);
\definecolor{drawColor}{RGB}{0,0,0}

\node[text=drawColor,anchor=base,inner sep=0pt, outer sep=0pt, scale=  1.21] at ( 87.15,  7.85) {\underline{\textbf{MISSING DATA}}};
\end{scope}
\begin{scope}
\path[clip] (  0.00,  0.00) rectangle (151.77,144.54);
\definecolor{drawColor}{RGB}{0,0,0}

\node[text=drawColor,anchor=base,inner sep=0pt, outer sep=0pt, scale=  0.88] at ( 87.15,132.98) {Yale};
\end{scope}
\end{tikzpicture}}
\\
\Scale[.55]{\input{Figures/FiguresSims/SubDimError.tex}} & \hspace{-.5cm}
\Scale[.55]{\input{Figures/FiguresSims/AmbientDimError.tex}} & \hspace{-.5cm}
\Scale[.55]{
\begin{tikzpicture}[x=1pt,y=1pt]
\definecolor{fillColor}{RGB}{255,255,255}
\path[use as bounding box,fill=fillColor,fill opacity=0.00] (0,0) rectangle (151.77,144.54);
\begin{scope}
\path[clip] (  0.00,  0.00) rectangle (151.77,144.54);
\definecolor{drawColor}{RGB}{255,255,255}
\definecolor{fillColor}{RGB}{255,255,255}

\path[draw=drawColor,line width= 0.6pt,line join=round,line cap=round,fill=fillColor] (  0.00,  0.00) rectangle (151.77,144.54);
\end{scope}
\begin{scope}
\path[clip] ( 28.02, 28.18) rectangle (146.27,125.77);
\definecolor{fillColor}{gray}{0.92}

\path[fill=fillColor] ( 28.02, 28.18) rectangle (146.27,125.77);
\definecolor{drawColor}{RGB}{255,255,255}

\path[draw=drawColor,line width= 0.3pt,line join=round] ( 28.02,115.42) --
	(146.27,115.42);

\path[draw=drawColor,line width= 0.3pt,line join=round] ( 28.02, 97.67) --
	(146.27, 97.67);

\path[draw=drawColor,line width= 0.3pt,line join=round] ( 28.02, 62.19) --
	(146.27, 62.19);

\path[draw=drawColor,line width= 0.3pt,line join=round] ( 44.15, 28.18) --
	( 44.15,125.77);

\path[draw=drawColor,line width= 0.3pt,line join=round] ( 65.65, 28.18) --
	( 65.65,125.77);

\path[draw=drawColor,line width= 0.3pt,line join=round] ( 87.15, 28.18) --
	( 87.15,125.77);

\path[draw=drawColor,line width= 0.3pt,line join=round] (108.64, 28.18) --
	(108.64,125.77);

\path[draw=drawColor,line width= 0.3pt,line join=round] (130.14, 28.18) --
	(130.14,125.77);

\path[draw=drawColor,line width= 0.6pt,line join=round] ( 28.02, 79.93) --
	(146.27, 79.93);

\path[draw=drawColor,line width= 0.6pt,line join=round] ( 28.02, 44.44) --
	(146.27, 44.44);

\path[draw=drawColor,line width= 0.6pt,line join=round] ( 33.40, 28.18) --
	( 33.40,125.77);

\path[draw=drawColor,line width= 0.6pt,line join=round] ( 54.90, 28.18) --
	( 54.90,125.77);

\path[draw=drawColor,line width= 0.6pt,line join=round] ( 76.40, 28.18) --
	( 76.40,125.77);

\path[draw=drawColor,line width= 0.6pt,line join=round] ( 97.89, 28.18) --
	( 97.89,125.77);

\path[draw=drawColor,line width= 0.6pt,line join=round] (119.39, 28.18) --
	(119.39,125.77);

\path[draw=drawColor,line width= 0.6pt,line join=round] (140.89, 28.18) --
	(140.89,125.77);
\definecolor{drawColor}{RGB}{0,0,255}

\path[draw=drawColor,line width= 2.3pt,line join=round] ( 33.40, 81.82) --
	( 54.90, 44.44) --
	( 76.40, 32.62) --
	( 97.89, 32.62) --
	(119.39, 32.62) --
	(140.89, 32.62);
\definecolor{drawColor}{RGB}{190,190,190}

\path[draw=drawColor,line width= 1.1pt,dash pattern=on 2pt off 2pt on 6pt off 2pt ,line join=round] ( 33.40,121.33) --
	( 54.90, 32.62) --
	( 76.40, 32.62) --
	( 97.89, 32.62) --
	(119.39, 32.62) --
	(140.89, 32.62);

\path[] ( 33.40, 32.62) --
	( 54.90, 32.62) --
	( 76.40, 32.62) --
	( 97.89, 32.62) --
	(119.39, 32.62) --
	(140.89, 32.62);
\definecolor{drawColor}{RGB}{72,118,255}

\path[draw=drawColor,line width= 1.1pt,dash pattern=on 4pt off 4pt ,line join=round] ( 33.40, 42.55) --
	( 54.90, 33.11) --
	( 76.40, 32.62) --
	( 97.89, 32.62) --
	(119.39, 32.62) --
	(140.89, 32.62);

\path[] ( 33.40, 32.62) --
	( 54.90, 32.62) --
	( 76.40, 32.62) --
	( 97.89, 32.62) --
	(119.39, 32.62) --
	(140.89, 32.62);

\path[] ( 33.40, 32.62) --
	( 54.90, 32.62) --
	( 76.40, 32.62) --
	( 97.89, 32.62) --
	(119.39, 32.62) --
	(140.89, 32.62);

\path[] ( 33.40, 32.62) --
	( 54.90, 32.62) --
	( 76.40, 32.62) --
	( 97.89, 32.62) --
	(119.39, 32.62) --
	(140.89, 32.62);

\path[] ( 33.40, 32.62) --
	( 54.90, 32.62) --
	( 76.40, 32.62) --
	( 97.89, 32.62) --
	(119.39, 32.62) --
	(140.89, 32.62);
\definecolor{drawColor}{RGB}{255,105,180}

\path[draw=drawColor,line width= 1.1pt,line join=round] ( 33.40, 42.55) --
	( 54.90, 37.58) --
	( 76.40, 32.62) --
	( 97.89, 32.62) --
	(119.39, 32.62) --
	(140.89, 32.62);
\definecolor{drawColor}{RGB}{218,165,32}

\path[draw=drawColor,line width= 1.1pt,dash pattern=on 4pt off 4pt ,line join=round] ( 33.40, 42.55) --
	( 54.90, 37.58) --
	( 76.40, 32.62) --
	( 97.89, 32.62) --
	(119.39, 32.62) --
	(140.89, 32.62);
\definecolor{drawColor}{RGB}{0,255,255}

\path[draw=drawColor,line width= 1.1pt,dash pattern=on 2pt off 2pt on 6pt off 2pt ,line join=round] ( 33.40, 42.55) --
	( 54.90, 37.58) --
	( 76.40, 32.62) --
	( 97.89, 32.62) --
	(119.39, 32.62) --
	(140.89, 32.62);
\definecolor{drawColor}{RGB}{0,0,0}

\path[draw=drawColor,line width= 1.1pt,dash pattern=on 4pt off 4pt ,line join=round] ( 33.40, 42.55) --
	( 54.90, 37.58) --
	( 76.40, 32.62) --
	( 97.89, 32.62) --
	(119.39, 32.62) --
	(140.89, 32.62);

\path[] ( 33.40, 32.62) --
	( 54.90, 32.62) --
	( 76.40, 32.62) --
	( 97.89, 32.62) --
	(119.39, 32.62) --
	(140.89, 32.62);

\path[] ( 33.40, 32.62) --
	( 54.90, 32.62) --
	( 76.40, 32.62) --
	( 97.89, 32.62) --
	(119.39, 32.62) --
	(140.89, 32.62);

\path[] ( 33.40, 32.62) --
	( 54.90, 32.62) --
	( 76.40, 32.62) --
	( 97.89, 32.62) --
	(119.39, 32.62) --
	(140.89, 32.62);

\path[] ( 33.40, 32.62) --
	( 54.90, 32.62) --
	( 76.40, 32.62) --
	( 97.89, 32.62) --
	(119.39, 32.62) --
	(140.89, 32.62);
\end{scope}
\begin{scope}
\path[clip] (  0.00,  0.00) rectangle (151.77,144.54);
\definecolor{drawColor}{RGB}{190,190,190}

\node[text=drawColor,rotate= 90.00,anchor=base,inner sep=0pt, outer sep=0pt, scale=  0.77] at ( 22.33, 79.93) {0.020};

\node[text=drawColor,rotate= 90.00,anchor=base,inner sep=0pt, outer sep=0pt, scale=  0.77] at ( 22.33, 44.44) {0.005};
\end{scope}
\begin{scope}
\path[clip] (  0.00,  0.00) rectangle (151.77,144.54);
\definecolor{drawColor}{gray}{0.20}

\path[draw=drawColor,line width= 0.6pt,line join=round] ( 25.27, 79.93) --
	( 28.02, 79.93);

\path[draw=drawColor,line width= 0.6pt,line join=round] ( 25.27, 44.44) --
	( 28.02, 44.44);
\end{scope}
\begin{scope}
\path[clip] (  0.00,  0.00) rectangle (151.77,144.54);
\definecolor{drawColor}{gray}{0.20}

\path[draw=drawColor,line width= 0.6pt,line join=round] ( 33.40, 25.43) --
	( 33.40, 28.18);

\path[draw=drawColor,line width= 0.6pt,line join=round] ( 54.90, 25.43) --
	( 54.90, 28.18);

\path[draw=drawColor,line width= 0.6pt,line join=round] ( 76.40, 25.43) --
	( 76.40, 28.18);

\path[draw=drawColor,line width= 0.6pt,line join=round] ( 97.89, 25.43) --
	( 97.89, 28.18);

\path[draw=drawColor,line width= 0.6pt,line join=round] (119.39, 25.43) --
	(119.39, 28.18);

\path[draw=drawColor,line width= 0.6pt,line join=round] (140.89, 25.43) --
	(140.89, 28.18);
\end{scope}
\begin{scope}
\path[clip] (  0.00,  0.00) rectangle (151.77,144.54);
\definecolor{drawColor}{RGB}{190,190,190}

\node[text=drawColor,anchor=base,inner sep=0pt, outer sep=0pt, scale=  0.78] at ( 33.40, 17.07) {6};

\node[text=drawColor,anchor=base,inner sep=0pt, outer sep=0pt, scale=  0.78] at ( 54.90, 17.07) {10};

\node[text=drawColor,anchor=base,inner sep=0pt, outer sep=0pt, scale=  0.78] at ( 76.40, 17.07) {15};

\node[text=drawColor,anchor=base,inner sep=0pt, outer sep=0pt, scale=  0.78] at ( 97.89, 17.07) {20};

\node[text=drawColor,anchor=base,inner sep=0pt, outer sep=0pt, scale=  0.78] at (119.39, 17.07) {25};

\node[text=drawColor,anchor=base,inner sep=0pt, outer sep=0pt, scale=  0.78] at (140.89, 17.07) {30};
\end{scope}
\begin{scope}
\path[clip] (  0.00,  0.00) rectangle (151.77,144.54);
\definecolor{drawColor}{RGB}{0,0,0}

\node[text=drawColor,anchor=base,inner sep=0pt, outer sep=0pt, scale=  0.91] at ( 87.15,  7.27) {Columns Per Subspace};
\end{scope}
\end{tikzpicture}} & \hspace{-.25cm}
\Scale[.55]{
\begin{tikzpicture}[x=1pt,y=1pt]
\definecolor{fillColor}{RGB}{255,255,255}
\path[use as bounding box,fill=fillColor,fill opacity=0.00] (0,0) rectangle (151.77,144.54);
\begin{scope}
\path[clip] (  0.00,  0.00) rectangle (151.77,144.54);
\definecolor{drawColor}{RGB}{255,255,255}
\definecolor{fillColor}{RGB}{255,255,255}

\path[draw=drawColor,line width= 0.6pt,line join=round,line cap=round,fill=fillColor] (  0.00,  0.00) rectangle (151.77,144.54);
\end{scope}
\begin{scope}
\path[clip] ( 28.02, 28.18) rectangle (146.27,125.77);
\definecolor{fillColor}{gray}{0.92}

\path[fill=fillColor] ( 28.02, 28.18) rectangle (146.27,125.77);
\definecolor{drawColor}{RGB}{255,255,255}

\path[draw=drawColor,line width= 0.3pt,line join=round] ( 28.02, 44.71) --
	(146.27, 44.71);

\path[draw=drawColor,line width= 0.3pt,line join=round] ( 28.02, 68.91) --
	(146.27, 68.91);

\path[draw=drawColor,line width= 0.3pt,line join=round] ( 28.02, 93.10) --
	(146.27, 93.10);

\path[draw=drawColor,line width= 0.3pt,line join=round] ( 28.02,117.30) --
	(146.27,117.30);

\path[draw=drawColor,line width= 0.3pt,line join=round] ( 30.57, 28.18) --
	( 30.57,125.77);

\path[draw=drawColor,line width= 0.3pt,line join=round] ( 87.15, 28.18) --
	( 87.15,125.77);

\path[draw=drawColor,line width= 0.3pt,line join=round] (143.72, 28.18) --
	(143.72,125.77);

\path[draw=drawColor,line width= 0.6pt,line join=round] ( 28.02, 32.62) --
	(146.27, 32.62);

\path[draw=drawColor,line width= 0.6pt,line join=round] ( 28.02, 56.81) --
	(146.27, 56.81);

\path[draw=drawColor,line width= 0.6pt,line join=round] ( 28.02, 81.01) --
	(146.27, 81.01);

\path[draw=drawColor,line width= 0.6pt,line join=round] ( 28.02,105.20) --
	(146.27,105.20);

\path[draw=drawColor,line width= 0.6pt,line join=round] ( 58.86, 28.18) --
	( 58.86,125.77);

\path[draw=drawColor,line width= 0.6pt,line join=round] (115.43, 28.18) --
	(115.43,125.77);
\definecolor{drawColor}{RGB}{0,0,0}
\definecolor{fillColor}{RGB}{72,118,255}

\path[draw=drawColor,line width= 0.6pt,line cap=rect,fill=fillColor] ( 78.66, 32.62) rectangle ( 84.32, 44.87);
\definecolor{fillColor}{RGB}{160,32,240}

\path[draw=drawColor,line width= 0.6pt,line cap=rect,fill=fillColor] ( 73.00, 32.62) rectangle ( 78.66, 55.92);
\definecolor{fillColor}{RGB}{255,0,255}

\path[draw=drawColor,line width= 0.6pt,line cap=rect,fill=fillColor] ( 67.34, 32.62) rectangle ( 73.00, 46.57);
\definecolor{fillColor}{RGB}{255,0,0}

\path[draw=drawColor,line width= 0.6pt,line cap=rect,fill=fillColor] ( 61.69, 32.62) rectangle ( 67.34, 58.75);
\definecolor{fillColor}{RGB}{255,165,0}

\path[draw=drawColor,line width= 0.6pt,line cap=rect,fill=fillColor] ( 56.03, 32.62) rectangle ( 61.69, 62.38);
\definecolor{fillColor}{RGB}{255,255,0}

\path[draw=drawColor,line width= 0.6pt,line cap=rect,fill=fillColor] ( 50.37, 32.62) rectangle ( 56.03, 65.68);
\definecolor{fillColor}{RGB}{173,255,47}

\path[draw=drawColor,line width= 0.6pt,line cap=rect,fill=fillColor] ( 44.71, 32.62) rectangle ( 50.37, 53.02);
\definecolor{fillColor}{RGB}{0,255,0}

\path[draw=drawColor,line width= 0.6pt,line cap=rect,fill=fillColor] ( 39.06, 32.62) rectangle ( 44.71, 59.47);
\definecolor{fillColor}{RGB}{0,0,255}

\path[draw=drawColor,line width= 0.6pt,line cap=rect,fill=fillColor] ( 33.40, 32.62) rectangle ( 39.06, 58.42);
\definecolor{fillColor}{RGB}{72,118,255}

\path[draw=drawColor,line width= 0.6pt,line cap=rect,fill=fillColor] (135.23, 32.62) rectangle (140.89, 68.10);
\definecolor{fillColor}{RGB}{160,32,240}

\path[draw=drawColor,line width= 0.6pt,line cap=rect,fill=fillColor] (129.58, 32.62) rectangle (135.23,121.33);
\definecolor{fillColor}{RGB}{255,0,255}

\path[draw=drawColor,line width= 0.6pt,line cap=rect,fill=fillColor] (123.92, 32.62) rectangle (129.58, 71.01);
\definecolor{fillColor}{RGB}{255,0,0}

\path[draw=drawColor,line width= 0.6pt,line cap=rect,fill=fillColor] (118.26, 32.62) rectangle (123.92, 80.52);
\definecolor{fillColor}{RGB}{255,165,0}

\path[draw=drawColor,line width= 0.6pt,line cap=rect,fill=fillColor] (112.60, 32.62) rectangle (118.26, 94.64);
\definecolor{fillColor}{RGB}{255,255,0}

\path[draw=drawColor,line width= 0.6pt,line cap=rect,fill=fillColor] (106.95, 32.62) rectangle (112.60,112.38);
\definecolor{fillColor}{RGB}{173,255,47}

\path[draw=drawColor,line width= 0.6pt,line cap=rect,fill=fillColor] (101.29, 32.62) rectangle (106.95, 82.30);
\definecolor{fillColor}{RGB}{0,255,0}

\path[draw=drawColor,line width= 0.6pt,line cap=rect,fill=fillColor] ( 95.63, 32.62) rectangle (101.29, 88.35);
\definecolor{fillColor}{RGB}{0,0,255}

\path[draw=drawColor,line width= 0.6pt,line cap=rect,fill=fillColor] ( 89.97, 32.62) rectangle ( 95.63,110.85);
\end{scope}
\begin{scope}
\path[clip] (  0.00,  0.00) rectangle (151.77,144.54);
\definecolor{drawColor}{RGB}{190,190,190}

\node[text=drawColor,rotate= 90.00,anchor=base,inner sep=0pt, outer sep=0pt, scale=  0.77] at ( 22.33, 32.62) {0.00};

\node[text=drawColor,rotate= 90.00,anchor=base,inner sep=0pt, outer sep=0pt, scale=  0.77] at ( 22.33, 56.81) {0.03};

\node[text=drawColor,rotate= 90.00,anchor=base,inner sep=0pt, outer sep=0pt, scale=  0.77] at ( 22.33, 81.01) {0.06};

\node[text=drawColor,rotate= 90.00,anchor=base,inner sep=0pt, outer sep=0pt, scale=  0.77] at ( 22.33,105.20) {0.09};
\end{scope}
\begin{scope}
\path[clip] (  0.00,  0.00) rectangle (151.77,144.54);
\definecolor{drawColor}{gray}{0.20}

\path[draw=drawColor,line width= 0.6pt,line join=round] ( 25.27, 32.62) --
	( 28.02, 32.62);

\path[draw=drawColor,line width= 0.6pt,line join=round] ( 25.27, 56.81) --
	( 28.02, 56.81);

\path[draw=drawColor,line width= 0.6pt,line join=round] ( 25.27, 81.01) --
	( 28.02, 81.01);

\path[draw=drawColor,line width= 0.6pt,line join=round] ( 25.27,105.20) --
	( 28.02,105.20);
\end{scope}
\begin{scope}
\path[clip] (  0.00,  0.00) rectangle (151.77,144.54);
\definecolor{drawColor}{gray}{0.20}

\path[draw=drawColor,line width= 0.6pt,line join=round] ( 58.86, 25.43) --
	( 58.86, 28.18);

\path[draw=drawColor,line width= 0.6pt,line join=round] (115.43, 25.43) --
	(115.43, 28.18);
\end{scope}
\begin{scope}
\path[clip] (  0.00,  0.00) rectangle (151.77,144.54);
\definecolor{drawColor}{RGB}{190,190,190}

\node[text=drawColor,anchor=base,inner sep=0pt, outer sep=0pt, scale=  0.78] at ( 58.86, 17.07) {2};

\node[text=drawColor,anchor=base,inner sep=0pt, outer sep=0pt, scale=  0.78] at (115.43, 17.07) {3};
\end{scope}
\begin{scope}
\path[clip] (  0.00,  0.00) rectangle (151.77,144.54);
\definecolor{drawColor}{RGB}{0,0,0}

\node[text=drawColor,anchor=base,inner sep=0pt, outer sep=0pt, scale=  0.91] at ( 87.15,  7.27) {Number of Objects};
\end{scope}
\begin{scope}
\path[clip] (  0.00,  0.00) rectangle (151.77,144.54);
\definecolor{drawColor}{RGB}{0,0,0}

\node[text=drawColor,anchor=base,inner sep=0pt, outer sep=0pt, scale=  0.88] at ( 87.15,132.98) {Hopkins};
\end{scope}
\end{tikzpicture}} & \hspace{-.5cm}
\Scale[.55]{
\begin{tikzpicture}[x=1pt,y=1pt]
\definecolor{fillColor}{RGB}{255,255,255}
\path[use as bounding box,fill=fillColor,fill opacity=0.00] (0,0) rectangle (151.77,144.54);
\begin{scope}
\path[clip] (  0.00,  0.00) rectangle (151.77,144.54);
\definecolor{drawColor}{RGB}{255,255,255}
\definecolor{fillColor}{RGB}{255,255,255}

\path[draw=drawColor,line width= 0.6pt,line join=round,line cap=round,fill=fillColor] (  0.00, -0.00) rectangle (151.77,144.54);
\end{scope}
\begin{scope}
\path[clip] ( 28.02, 30.80) rectangle (146.27,125.77);
\definecolor{fillColor}{gray}{0.92}

\path[fill=fillColor] ( 28.02, 30.80) rectangle (146.27,125.77);
\definecolor{drawColor}{RGB}{255,255,255}

\path[draw=drawColor,line width= 0.3pt,line join=round] ( 28.02,124.65) --
	(146.27,124.65);

\path[draw=drawColor,line width= 0.3pt,line join=round] ( 28.02, 99.07) --
	(146.27, 99.07);

\path[draw=drawColor,line width= 0.3pt,line join=round] ( 28.02, 73.49) --
	(146.27, 73.49);

\path[draw=drawColor,line width= 0.3pt,line join=round] ( 28.02, 51.11) --
	(146.27, 51.11);

\path[draw=drawColor,line width= 0.3pt,line join=round] ( 45.34, 30.80) --
	( 45.34,125.77);

\path[draw=drawColor,line width= 0.3pt,line join=round] ( 69.23, 30.80) --
	( 69.23,125.77);

\path[draw=drawColor,line width= 0.3pt,line join=round] ( 93.12, 30.80) --
	( 93.12,125.77);

\path[draw=drawColor,line width= 0.3pt,line join=round] (117.00, 30.80) --
	(117.00,125.77);

\path[draw=drawColor,line width= 0.3pt,line join=round] (134.92, 30.80) --
	(134.92,125.77);

\path[draw=drawColor,line width= 0.6pt,line join=round] ( 28.02,111.86) --
	(146.27,111.86);

\path[draw=drawColor,line width= 0.6pt,line join=round] ( 28.02, 86.28) --
	(146.27, 86.28);

\path[draw=drawColor,line width= 0.6pt,line join=round] ( 28.02, 60.70) --
	(146.27, 60.70);

\path[draw=drawColor,line width= 0.6pt,line join=round] ( 28.02, 41.51) --
	(146.27, 41.51);

\path[draw=drawColor,line width= 0.6pt,line join=round] ( 33.40, 30.80) --
	( 33.40,125.77);

\path[draw=drawColor,line width= 0.6pt,line join=round] ( 57.29, 30.80) --
	( 57.29,125.77);

\path[draw=drawColor,line width= 0.6pt,line join=round] ( 81.17, 30.80) --
	( 81.17,125.77);

\path[draw=drawColor,line width= 0.6pt,line join=round] (105.06, 30.80) --
	(105.06,125.77);

\path[draw=drawColor,line width= 0.6pt,line join=round] (128.95, 30.80) --
	(128.95,125.77);

\path[draw=drawColor,line width= 0.6pt,line join=round] (140.89, 30.80) --
	(140.89,125.77);
\definecolor{drawColor}{RGB}{0,0,255}

\path[draw=drawColor,line width= 2.3pt,line join=round] ( 33.40, 48.86) --
	( 45.34, 49.06) --
	( 57.29, 48.80) --
	( 69.23, 51.36) --
	( 81.17, 54.06) --
	( 93.12, 53.51) --
	(105.06, 55.19) --
	(117.00, 54.30) --
	(128.95, 54.34) --
	(140.89,115.06);

\path[] ( 33.40, 35.12) --
	( 45.34, 35.12) --
	( 57.29, 35.12) --
	( 69.23, 35.12) --
	( 81.17, 35.12) --
	( 93.12, 35.12) --
	(105.06, 35.12) --
	(117.00, 35.12) --
	(128.95, 35.12) --
	(140.89, 35.12);

\path[] ( 33.40, 35.12) --
	( 45.34, 35.12) --
	( 57.29, 35.12) --
	( 69.23, 35.12) --
	( 81.17, 35.12) --
	( 93.12, 35.12) --
	(105.06, 35.12) --
	(117.00, 35.12) --
	(128.95, 35.12) --
	(140.89, 35.12);

\path[] ( 33.40, 35.12) --
	( 45.34, 35.12) --
	( 57.29, 35.12) --
	( 69.23, 35.12) --
	( 81.17, 35.12) --
	( 93.12, 35.12) --
	(105.06, 35.12) --
	(117.00, 35.12) --
	(128.95, 35.12) --
	(140.89, 35.12);

\path[] ( 33.40, 35.12) --
	( 45.34, 35.12) --
	( 57.29, 35.12) --
	( 69.23, 35.12) --
	( 81.17, 35.12) --
	( 93.12, 35.12) --
	(105.06, 35.12) --
	(117.00, 35.12) --
	(128.95, 35.12) --
	(140.89, 35.12);

\path[] ( 33.40, 35.12) --
	( 45.34, 35.12) --
	( 57.29, 35.12) --
	( 69.23, 35.12) --
	( 81.17, 35.12) --
	( 93.12, 35.12) --
	(105.06, 35.12) --
	(117.00, 35.12) --
	(128.95, 35.12) --
	(140.89, 35.12);

\path[] ( 33.40, 35.12) --
	( 45.34, 35.12) --
	( 57.29, 35.12) --
	( 69.23, 35.12) --
	( 81.17, 35.12) --
	( 93.12, 35.12) --
	(105.06, 35.12) --
	(117.00, 35.12) --
	(128.95, 35.12) --
	(140.89, 35.12);

\path[] ( 33.40, 35.12) --
	( 45.34, 35.12) --
	( 57.29, 35.12) --
	( 69.23, 35.12) --
	( 81.17, 35.12) --
	( 93.12, 35.12) --
	(105.06, 35.12) --
	(117.00, 35.12) --
	(128.95, 35.12) --
	(140.89, 35.12);
\definecolor{drawColor}{RGB}{255,105,180}

\path[draw=drawColor,line width= 1.1pt,line join=round] ( 33.40, 54.30) --
	( 45.34, 59.10) --
	( 57.29, 67.09) --
	( 69.23, 70.93) --
	( 81.17, 79.88) --
	( 93.12, 96.51) --
	(105.06, 95.03) --
	(117.00, 98.65) --
	(128.95,102.83) --
	(140.89,121.45);
\definecolor{drawColor}{RGB}{218,165,32}

\path[draw=drawColor,line width= 1.1pt,dash pattern=on 4pt off 4pt ,line join=round] ( 33.40, 47.91) --
	( 45.34, 56.86) --
	( 57.29, 60.70) --
	( 69.23, 70.29) --
	( 81.17, 76.69) --
	( 93.12, 93.95) --
	(105.06, 93.95) --
	(117.00, 98.65) --
	(128.95,102.52) --
	(140.89,118.25);
\definecolor{drawColor}{RGB}{0,255,255}

\path[draw=drawColor,line width= 1.1pt,dash pattern=on 2pt off 2pt on 6pt off 2pt ,line join=round] ( 33.40, 47.91) --
	( 45.34, 57.50) --
	( 57.29, 60.70) --
	( 69.23, 69.01) --
	( 81.17, 73.49) --
	( 93.12, 92.67) --
	(105.06, 94.98) --
	(117.00, 98.56) --
	(128.95,102.27) --
	(140.89,115.06);
\definecolor{drawColor}{RGB}{0,0,0}

\path[draw=drawColor,line width= 1.1pt,dash pattern=on 4pt off 4pt ,line join=round] ( 33.40, 51.11) --
	( 45.34, 56.86) --
	( 57.29, 62.30) --
	( 69.23, 69.65) --
	( 81.17, 73.49) --
	( 93.12, 94.27) --
	(105.06, 94.11) --
	(117.00, 98.43) --
	(128.95,102.65) --
	(140.89,115.06);
\definecolor{drawColor}{RGB}{102,205,170}

\path[draw=drawColor,line width= 1.1pt,dash pattern=on 2pt off 2pt on 6pt off 2pt ,line join=round] ( 33.40, 41.51) --
	( 45.34, 41.51) --
	( 57.29, 41.51) --
	( 69.23, 54.30) --
	( 81.17, 62.62) --
	( 93.12, 97.47) --
	(105.06,107.06) --
	(117.00,105.46) --
	(128.95,113.46) --
	(140.89,121.45);
\definecolor{drawColor}{RGB}{0,100,0}

\path[draw=drawColor,line width= 1.1pt,dash pattern=on 2pt off 2pt on 6pt off 2pt ,line join=round] ( 33.40, 48.86) --
	( 45.34, 49.06) --
	( 57.29, 48.80) --
	( 69.23, 51.36) --
	( 81.17, 56.72) --
	( 93.12, 57.04) --
	(105.06, 56.51) --
	(117.00, 57.81) --
	(128.95,102.83) --
	(140.89,113.46);

\path[] ( 33.40, 35.12) --
	( 45.34, 35.12) --
	( 57.29, 35.12) --
	( 69.23, 35.12) --
	( 81.17, 35.12) --
	( 93.12, 35.12) --
	(105.06, 35.12) --
	(117.00, 35.12) --
	(128.95, 35.12) --
	(140.89, 35.12);

\path[] ( 33.40, 35.12) --
	( 45.34, 35.12) --
	( 57.29, 35.12) --
	( 69.23, 35.12) --
	( 81.17, 35.12) --
	( 93.12, 35.12) --
	(105.06, 35.12) --
	(117.00, 35.12) --
	(128.95, 35.12) --
	(140.89, 35.12);
\end{scope}
\begin{scope}
\path[clip] (  0.00,  0.00) rectangle (151.77,144.54);
\definecolor{drawColor}{RGB}{190,190,190}

\node[text=drawColor,rotate= 90.00,anchor=base,inner sep=0pt, outer sep=0pt, scale=  0.77] at ( 22.33,111.86) {0.6};

\node[text=drawColor,rotate= 90.00,anchor=base,inner sep=0pt, outer sep=0pt, scale=  0.77] at ( 22.33, 86.28) {0.4};

\node[text=drawColor,rotate= 90.00,anchor=base,inner sep=0pt, outer sep=0pt, scale=  0.77] at ( 22.33, 60.70) {0.2};

\node[text=drawColor,rotate= 90.00,anchor=base,inner sep=0pt, outer sep=0pt, scale=  0.77] at ( 22.33, 41.51) {0.05};
\end{scope}
\begin{scope}
\path[clip] (  0.00,  0.00) rectangle (151.77,144.54);
\definecolor{drawColor}{gray}{0.20}

\path[draw=drawColor,line width= 0.6pt,line join=round] ( 25.27,111.86) --
	( 28.02,111.86);

\path[draw=drawColor,line width= 0.6pt,line join=round] ( 25.27, 86.28) --
	( 28.02, 86.28);

\path[draw=drawColor,line width= 0.6pt,line join=round] ( 25.27, 60.70) --
	( 28.02, 60.70);

\path[draw=drawColor,line width= 0.6pt,line join=round] ( 25.27, 41.51) --
	( 28.02, 41.51);
\end{scope}
\begin{scope}
\path[clip] (  0.00,  0.00) rectangle (151.77,144.54);
\definecolor{drawColor}{gray}{0.20}

\path[draw=drawColor,line width= 0.6pt,line join=round] ( 33.40, 28.05) --
	( 33.40, 30.80);

\path[draw=drawColor,line width= 0.6pt,line join=round] ( 57.29, 28.05) --
	( 57.29, 30.80);

\path[draw=drawColor,line width= 0.6pt,line join=round] ( 81.17, 28.05) --
	( 81.17, 30.80);

\path[draw=drawColor,line width= 0.6pt,line join=round] (105.06, 28.05) --
	(105.06, 30.80);

\path[draw=drawColor,line width= 0.6pt,line join=round] (128.95, 28.05) --
	(128.95, 30.80);

\path[draw=drawColor,line width= 0.6pt,line join=round] (140.89, 28.05) --
	(140.89, 30.80);
\end{scope}
\begin{scope}
\path[clip] (  0.00,  0.00) rectangle (151.77,144.54);
\definecolor{drawColor}{RGB}{190,190,190}

\node[text=drawColor,anchor=base,inner sep=0pt, outer sep=0pt, scale=  0.78] at ( 33.40, 19.70) {.1};

\node[text=drawColor,anchor=base,inner sep=0pt, outer sep=0pt, scale=  0.78] at ( 57.29, 19.70) {.3};

\node[text=drawColor,anchor=base,inner sep=0pt, outer sep=0pt, scale=  0.78] at ( 81.17, 19.70) {.5};

\node[text=drawColor,anchor=base,inner sep=0pt, outer sep=0pt, scale=  0.78] at (105.06, 19.70) {.7};

\node[text=drawColor,anchor=base,inner sep=0pt, outer sep=0pt, scale=  0.78] at (128.95, 19.70) {.9};

\node[text=drawColor,anchor=base,inner sep=0pt, outer sep=0pt, scale=  0.78] at (140.89, 19.70) {.99};
\end{scope}
\begin{scope}
\path[clip] (  0.00,  0.00) rectangle (151.77,144.54);
\definecolor{drawColor}{RGB}{0,0,0}

\node[text=drawColor,anchor=base,inner sep=0pt, outer sep=0pt, scale=  1.21] at ( 87.15,  7.85) {\underline{\textbf{MISSING DATA}}};
\end{scope}
\begin{scope}
\path[clip] (  0.00,  0.00) rectangle (151.77,144.54);
\definecolor{drawColor}{RGB}{0,0,0}

\node[text=drawColor,anchor=base,inner sep=0pt, outer sep=0pt, scale=  0.88] at ( 87.15,132.98) {Hopkins};
\end{scope}
\end{tikzpicture}}
\\
\Scale[.55]{\input{Figures/FiguresSims/NoiseError}} & \hspace{-.5cm}
\Scale[.55]{
\begin{tikzpicture}[x=1pt,y=1pt]
\definecolor{fillColor}{RGB}{255,255,255}
\path[use as bounding box,fill=fillColor,fill opacity=0.00] (0,0) rectangle (151.77,144.54);
\begin{scope}
\path[clip] (  0.00,  0.00) rectangle (151.77,144.54);
\definecolor{drawColor}{RGB}{255,255,255}
\definecolor{fillColor}{RGB}{255,255,255}

\path[draw=drawColor,line width= 0.6pt,line join=round,line cap=round,fill=fillColor] (  0.00, -0.00) rectangle (151.77,144.54);
\end{scope}
\begin{scope}
\path[clip] ( 28.02, 30.80) rectangle (146.27,125.77);
\definecolor{fillColor}{gray}{0.92}

\path[fill=fillColor] ( 28.02, 30.80) rectangle (146.27,125.77);
\definecolor{drawColor}{RGB}{255,255,255}

\path[draw=drawColor,line width= 0.3pt,line join=round] ( 28.02,109.27) --
	(146.27,109.27);

\path[draw=drawColor,line width= 0.3pt,line join=round] ( 28.02, 88.08) --
	(146.27, 88.08);

\path[draw=drawColor,line width= 0.3pt,line join=round] ( 28.02, 66.90) --
	(146.27, 66.90);

\path[draw=drawColor,line width= 0.3pt,line join=round] ( 28.02, 45.71) --
	(146.27, 45.71);

\path[draw=drawColor,line width= 0.3pt,line join=round] ( 41.08, 30.80) --
	( 41.08,125.77);

\path[draw=drawColor,line width= 0.3pt,line join=round] ( 56.43, 30.80) --
	( 56.43,125.77);

\path[draw=drawColor,line width= 0.3pt,line join=round] ( 71.79, 30.80) --
	( 71.79,125.77);

\path[draw=drawColor,line width= 0.3pt,line join=round] ( 87.15, 30.80) --
	( 87.15,125.77);

\path[draw=drawColor,line width= 0.3pt,line join=round] (102.50, 30.80) --
	(102.50,125.77);

\path[draw=drawColor,line width= 0.3pt,line join=round] (117.86, 30.80) --
	(117.86,125.77);

\path[draw=drawColor,line width= 0.3pt,line join=round] (133.21, 30.80) --
	(133.21,125.77);

\path[draw=drawColor,line width= 0.6pt,line join=round] ( 28.02,119.86) --
	(146.27,119.86);

\path[draw=drawColor,line width= 0.6pt,line join=round] ( 28.02, 98.68) --
	(146.27, 98.68);

\path[draw=drawColor,line width= 0.6pt,line join=round] ( 28.02, 77.49) --
	(146.27, 77.49);

\path[draw=drawColor,line width= 0.6pt,line join=round] ( 28.02, 56.30) --
	(146.27, 56.30);

\path[draw=drawColor,line width= 0.6pt,line join=round] ( 28.02, 35.12) --
	(146.27, 35.12);

\path[draw=drawColor,line width= 0.6pt,line join=round] ( 33.40, 30.80) --
	( 33.40,125.77);

\path[draw=drawColor,line width= 0.6pt,line join=round] ( 48.75, 30.80) --
	( 48.75,125.77);

\path[draw=drawColor,line width= 0.6pt,line join=round] ( 64.11, 30.80) --
	( 64.11,125.77);

\path[draw=drawColor,line width= 0.6pt,line join=round] ( 79.47, 30.80) --
	( 79.47,125.77);

\path[draw=drawColor,line width= 0.6pt,line join=round] ( 94.82, 30.80) --
	( 94.82,125.77);

\path[draw=drawColor,line width= 0.6pt,line join=round] (110.18, 30.80) --
	(110.18,125.77);

\path[draw=drawColor,line width= 0.6pt,line join=round] (125.54, 30.80) --
	(125.54,125.77);

\path[draw=drawColor,line width= 0.6pt,line join=round] (140.89, 30.80) --
	(140.89,125.77);
\definecolor{drawColor}{RGB}{0,0,255}

\path[draw=drawColor,line width= 2.3pt,line join=round] ( 33.40, 36.85) --
	( 48.75, 36.44) --
	( 64.11, 39.36) --
	( 79.47, 44.65) --
	( 94.82, 48.50) --
	(110.18, 43.06) --
	(125.54, 43.06) --
	(140.89,109.27);

\path[] ( 33.40, 35.12) --
	( 48.75, 35.12) --
	( 64.11, 35.12) --
	( 79.47, 35.12) --
	( 94.82, 35.12) --
	(110.18, 35.12) --
	(125.54, 35.12) --
	(140.89, 35.12);

\path[] ( 33.40, 35.12) --
	( 48.75, 35.12) --
	( 64.11, 35.12) --
	( 79.47, 35.12) --
	( 94.82, 35.12) --
	(110.18, 35.12) --
	(125.54, 35.12) --
	(140.89, 35.12);

\path[] ( 33.40, 35.12) --
	( 48.75, 35.12) --
	( 64.11, 35.12) --
	( 79.47, 35.12) --
	( 94.82, 35.12) --
	(110.18, 35.12) --
	(125.54, 35.12) --
	(140.89, 35.12);

\path[] ( 33.40, 35.12) --
	( 48.75, 35.12) --
	( 64.11, 35.12) --
	( 79.47, 35.12) --
	( 94.82, 35.12) --
	(110.18, 35.12) --
	(125.54, 35.12) --
	(140.89, 35.12);

\path[] ( 33.40, 35.12) --
	( 48.75, 35.12) --
	( 64.11, 35.12) --
	( 79.47, 35.12) --
	( 94.82, 35.12) --
	(110.18, 35.12) --
	(125.54, 35.12) --
	(140.89, 35.12);

\path[] ( 33.40, 35.12) --
	( 48.75, 35.12) --
	( 64.11, 35.12) --
	( 79.47, 35.12) --
	( 94.82, 35.12) --
	(110.18, 35.12) --
	(125.54, 35.12) --
	(140.89, 35.12);

\path[] ( 33.40, 35.12) --
	( 48.75, 35.12) --
	( 64.11, 35.12) --
	( 79.47, 35.12) --
	( 94.82, 35.12) --
	(110.18, 35.12) --
	(125.54, 35.12) --
	(140.89, 35.12);
\definecolor{drawColor}{RGB}{255,105,180}

\path[draw=drawColor,line width= 1.1pt,line join=round] ( 33.40, 39.09) --
	( 48.75, 44.39) --
	( 64.11, 44.39) --
	( 79.47, 74.84) --
	( 94.82,100.00) --
	(110.18,106.62) --
	(125.54,111.92) --
	(140.89,117.74);
\definecolor{drawColor}{RGB}{218,165,32}

\path[draw=drawColor,line width= 1.1pt,dash pattern=on 4pt off 4pt ,line join=round] ( 33.40, 39.09) --
	( 48.75, 39.09) --
	( 64.11, 39.09) --
	( 79.47, 57.63) --
	( 94.82,107.95) --
	(110.18,106.62) --
	(125.54,111.92) --
	(140.89,121.45);
\definecolor{drawColor}{RGB}{0,255,255}

\path[draw=drawColor,line width= 1.1pt,dash pattern=on 2pt off 2pt on 6pt off 2pt ,line join=round] ( 33.40, 36.44) --
	( 48.75, 36.44) --
	( 64.11, 36.44) --
	( 79.47, 52.33) --
	( 94.82,105.30) --
	(110.18,105.30) --
	(125.54,106.62) --
	(140.89,118.27);
\definecolor{drawColor}{RGB}{0,0,0}

\path[draw=drawColor,line width= 1.1pt,dash pattern=on 4pt off 4pt ,line join=round] ( 33.40, 39.09) --
	( 48.75, 39.09) --
	( 64.11, 39.09) --
	( 79.47, 57.63) --
	( 94.82,118.54) --
	(110.18,113.24) --
	(125.54,113.24) --
	(140.89,113.51);
\definecolor{drawColor}{RGB}{102,205,170}

\path[draw=drawColor,line width= 1.1pt,dash pattern=on 2pt off 2pt on 6pt off 2pt ,line join=round] ( 33.40, 35.12) --
	( 48.75, 35.12) --
	( 64.11, 35.12) --
	( 79.47, 69.55) --
	( 94.82,109.27) --
	(110.18,100.00) --
	(125.54,110.59) --
	(140.89,117.74);

\path[] ( 33.40, 35.12) --
	( 48.75, 35.12) --
	( 64.11, 35.12) --
	( 79.47, 35.12) --
	( 94.82, 35.12) --
	(110.18, 35.12) --
	(125.54, 35.12) --
	(140.89, 35.12);

\path[] ( 33.40, 35.12) --
	( 48.75, 35.12) --
	( 64.11, 35.12) --
	( 79.47, 35.12) --
	( 94.82, 35.12) --
	(110.18, 35.12) --
	(125.54, 35.12) --
	(140.89, 35.12);

\path[] ( 33.40, 35.12) --
	( 48.75, 35.12) --
	( 64.11, 35.12) --
	( 79.47, 35.12) --
	( 94.82, 35.12) --
	(110.18, 35.12) --
	(125.54, 35.12) --
	(140.89, 35.12);
\end{scope}
\begin{scope}
\path[clip] (  0.00,  0.00) rectangle (151.77,144.54);
\definecolor{drawColor}{RGB}{190,190,190}

\node[text=drawColor,rotate= 90.00,anchor=base,inner sep=0pt, outer sep=0pt, scale=  0.77] at ( 22.33,119.86) {0.8};

\node[text=drawColor,rotate= 90.00,anchor=base,inner sep=0pt, outer sep=0pt, scale=  0.77] at ( 22.33, 98.68) {0.6};

\node[text=drawColor,rotate= 90.00,anchor=base,inner sep=0pt, outer sep=0pt, scale=  0.77] at ( 22.33, 77.49) {0.4};

\node[text=drawColor,rotate= 90.00,anchor=base,inner sep=0pt, outer sep=0pt, scale=  0.77] at ( 22.33, 56.30) {0.2};

\node[text=drawColor,rotate= 90.00,anchor=base,inner sep=0pt, outer sep=0pt, scale=  0.77] at ( 22.33, 35.12) {0.0};
\end{scope}
\begin{scope}
\path[clip] (  0.00,  0.00) rectangle (151.77,144.54);
\definecolor{drawColor}{gray}{0.20}

\path[draw=drawColor,line width= 0.6pt,line join=round] ( 25.27,119.86) --
	( 28.02,119.86);

\path[draw=drawColor,line width= 0.6pt,line join=round] ( 25.27, 98.68) --
	( 28.02, 98.68);

\path[draw=drawColor,line width= 0.6pt,line join=round] ( 25.27, 77.49) --
	( 28.02, 77.49);

\path[draw=drawColor,line width= 0.6pt,line join=round] ( 25.27, 56.30) --
	( 28.02, 56.30);

\path[draw=drawColor,line width= 0.6pt,line join=round] ( 25.27, 35.12) --
	( 28.02, 35.12);
\end{scope}
\begin{scope}
\path[clip] (  0.00,  0.00) rectangle (151.77,144.54);
\definecolor{drawColor}{gray}{0.20}

\path[draw=drawColor,line width= 0.6pt,line join=round] ( 33.40, 28.05) --
	( 33.40, 30.80);

\path[draw=drawColor,line width= 0.6pt,line join=round] ( 48.75, 28.05) --
	( 48.75, 30.80);

\path[draw=drawColor,line width= 0.6pt,line join=round] ( 64.11, 28.05) --
	( 64.11, 30.80);

\path[draw=drawColor,line width= 0.6pt,line join=round] ( 79.47, 28.05) --
	( 79.47, 30.80);

\path[draw=drawColor,line width= 0.6pt,line join=round] ( 94.82, 28.05) --
	( 94.82, 30.80);

\path[draw=drawColor,line width= 0.6pt,line join=round] (110.18, 28.05) --
	(110.18, 30.80);

\path[draw=drawColor,line width= 0.6pt,line join=round] (125.54, 28.05) --
	(125.54, 30.80);

\path[draw=drawColor,line width= 0.6pt,line join=round] (140.89, 28.05) --
	(140.89, 30.80);
\end{scope}
\begin{scope}
\path[clip] (  0.00,  0.00) rectangle (151.77,144.54);
\definecolor{drawColor}{RGB}{190,190,190}

\node[text=drawColor,anchor=base,inner sep=0pt, outer sep=0pt, scale=  0.78] at ( 33.40, 19.70) {.3};

\node[text=drawColor,anchor=base,inner sep=0pt, outer sep=0pt, scale=  0.78] at ( 48.75, 19.70) {.4};

\node[text=drawColor,anchor=base,inner sep=0pt, outer sep=0pt, scale=  0.78] at ( 64.11, 19.70) {.5};

\node[text=drawColor,anchor=base,inner sep=0pt, outer sep=0pt, scale=  0.78] at ( 79.47, 19.70) {.6};

\node[text=drawColor,anchor=base,inner sep=0pt, outer sep=0pt, scale=  0.78] at ( 94.82, 19.70) {.7};

\node[text=drawColor,anchor=base,inner sep=0pt, outer sep=0pt, scale=  0.78] at (110.18, 19.70) {.8};

\node[text=drawColor,anchor=base,inner sep=0pt, outer sep=0pt, scale=  0.78] at (125.54, 19.70) {.9};

\node[text=drawColor,anchor=base,inner sep=0pt, outer sep=0pt, scale=  0.78] at (140.89, 19.70) {.94};
\end{scope}
\begin{scope}
\path[clip] (  0.00,  0.00) rectangle (151.77,144.54);
\definecolor{drawColor}{RGB}{0,0,0}

\node[text=drawColor,anchor=base,inner sep=0pt, outer sep=0pt, scale=  1.21] at ( 87.15,  7.85) {\underline{\textbf{MISSING DATA}}};
\end{scope}
\begin{scope}
\path[clip] (  0.00,  0.00) rectangle (151.77,144.54);
\definecolor{drawColor}{RGB}{0,0,0}

\node[text=drawColor,anchor=base,inner sep=0pt, outer sep=0pt, scale=  0.88] at ( 87.15,132.98) {$(n_k=20)$};
\end{scope}
\end{tikzpicture}} & \hspace{-.5cm}
\Scale[.55]{\input{Figures/FiguresSims/SamplingError50.tex}} & \hspace{-.25cm}
\Scale[.55]{\input{Figures/FiguresData/MNISTFull.tex}} & \hspace{-.5cm}
\Scale[.55]{
\begin{tikzpicture}[x=1pt,y=1pt]
\definecolor{fillColor}{RGB}{255,255,255}
\path[use as bounding box,fill=fillColor,fill opacity=0.00] (0,0) rectangle (151.77,144.54);
\begin{scope}
\path[clip] (  0.00,  0.00) rectangle (151.77,144.54);
\definecolor{drawColor}{RGB}{255,255,255}
\definecolor{fillColor}{RGB}{255,255,255}

\path[draw=drawColor,line width= 0.6pt,line join=round,line cap=round,fill=fillColor] (  0.00, -0.00) rectangle (151.77,144.54);
\end{scope}
\begin{scope}
\path[clip] ( 28.02, 30.80) rectangle (146.27,125.77);
\definecolor{fillColor}{gray}{0.92}

\path[fill=fillColor] ( 28.02, 30.80) rectangle (146.27,125.77);
\definecolor{drawColor}{RGB}{255,255,255}

\path[draw=drawColor,line width= 0.3pt,line join=round] ( 28.02, 47.11) --
	(146.27, 47.11);

\path[draw=drawColor,line width= 0.3pt,line join=round] ( 28.02, 71.09) --
	(146.27, 71.09);

\path[draw=drawColor,line width= 0.3pt,line join=round] ( 28.02, 95.07) --
	(146.27, 95.07);

\path[draw=drawColor,line width= 0.3pt,line join=round] ( 28.02,119.05) --
	(146.27,119.05);

\path[draw=drawColor,line width= 0.3pt,line join=round] ( 45.34, 30.80) --
	( 45.34,125.77);

\path[draw=drawColor,line width= 0.3pt,line join=round] ( 69.23, 30.80) --
	( 69.23,125.77);

\path[draw=drawColor,line width= 0.3pt,line join=round] ( 93.12, 30.80) --
	( 93.12,125.77);

\path[draw=drawColor,line width= 0.3pt,line join=round] (117.00, 30.80) --
	(117.00,125.77);

\path[draw=drawColor,line width= 0.3pt,line join=round] (134.92, 30.80) --
	(134.92,125.77);

\path[draw=drawColor,line width= 0.6pt,line join=round] ( 28.02, 35.12) --
	(146.27, 35.12);

\path[draw=drawColor,line width= 0.6pt,line join=round] ( 28.02, 59.10) --
	(146.27, 59.10);

\path[draw=drawColor,line width= 0.6pt,line join=round] ( 28.02, 83.08) --
	(146.27, 83.08);

\path[draw=drawColor,line width= 0.6pt,line join=round] ( 28.02,107.06) --
	(146.27,107.06);

\path[draw=drawColor,line width= 0.6pt,line join=round] ( 33.40, 30.80) --
	( 33.40,125.77);

\path[draw=drawColor,line width= 0.6pt,line join=round] ( 57.29, 30.80) --
	( 57.29,125.77);

\path[draw=drawColor,line width= 0.6pt,line join=round] ( 81.17, 30.80) --
	( 81.17,125.77);

\path[draw=drawColor,line width= 0.6pt,line join=round] (105.06, 30.80) --
	(105.06,125.77);

\path[draw=drawColor,line width= 0.6pt,line join=round] (128.95, 30.80) --
	(128.95,125.77);

\path[draw=drawColor,line width= 0.6pt,line join=round] (140.89, 30.80) --
	(140.89,125.77);
\definecolor{drawColor}{RGB}{0,0,255}

\path[draw=drawColor,line width= 2.3pt,line join=round] ( 33.40, 40.04) --
	( 45.34, 43.99) --
	( 57.29, 42.43) --
	( 69.23, 45.49) --
	( 81.17, 47.89) --
	( 93.12, 44.47) --
	(105.06, 50.83) --
	(117.00, 44.17) --
	(128.95, 44.71) --
	(140.89,119.05);

\path[] ( 33.40, 11.14) --
	( 45.34, 11.14) --
	( 57.29, 11.14) --
	( 69.23, 11.14) --
	( 81.17, 11.14) --
	( 93.12, 11.14) --
	(105.06, 11.14) --
	(117.00, 11.14) --
	(128.95, 11.14) --
	(140.89, 11.14);

\path[] ( 33.40, 11.14) --
	( 45.34, 11.14) --
	( 57.29, 11.14) --
	( 69.23, 11.14) --
	( 81.17, 11.14) --
	( 93.12, 11.14) --
	(105.06, 11.14) --
	(117.00, 11.14) --
	(128.95, 11.14) --
	(140.89, 11.14);

\path[] ( 33.40, 11.14) --
	( 45.34, 11.14) --
	( 57.29, 11.14) --
	( 69.23, 11.14) --
	( 81.17, 11.14) --
	( 93.12, 11.14) --
	(105.06, 11.14) --
	(117.00, 11.14) --
	(128.95, 11.14) --
	(140.89, 11.14);

\path[] ( 33.40, 11.14) --
	( 45.34, 11.14) --
	( 57.29, 11.14) --
	( 69.23, 11.14) --
	( 81.17, 11.14) --
	( 93.12, 11.14) --
	(105.06, 11.14) --
	(117.00, 11.14) --
	(128.95, 11.14) --
	(140.89, 11.14);

\path[] ( 33.40, 11.14) --
	( 45.34, 11.14) --
	( 57.29, 11.14) --
	( 69.23, 11.14) --
	( 81.17, 11.14) --
	( 93.12, 11.14) --
	(105.06, 11.14) --
	(117.00, 11.14) --
	(128.95, 11.14) --
	(140.89, 11.14);

\path[] ( 33.40, 11.14) --
	( 45.34, 11.14) --
	( 57.29, 11.14) --
	( 69.23, 11.14) --
	( 81.17, 11.14) --
	( 93.12, 11.14) --
	(105.06, 11.14) --
	(117.00, 11.14) --
	(128.95, 11.14) --
	(140.89, 11.14);

\path[] ( 33.40, 11.14) --
	( 45.34, 11.14) --
	( 57.29, 11.14) --
	( 69.23, 11.14) --
	( 81.17, 11.14) --
	( 93.12, 11.14) --
	(105.06, 11.14) --
	(117.00, 11.14) --
	(128.95, 11.14) --
	(140.89, 11.14);
\definecolor{drawColor}{RGB}{255,105,180}

\path[draw=drawColor,line width= 1.1pt,line join=round] ( 33.40, 90.20) --
	( 45.34, 89.74) --
	( 57.29, 90.22) --
	( 69.23, 82.42) --
	( 81.17, 78.29) --
	( 93.12, 76.49) --
	(105.06, 70.91) --
	(117.00, 73.49) --
	(128.95, 80.62) --
	(140.89,119.05);
\definecolor{drawColor}{RGB}{218,165,32}

\path[draw=drawColor,line width= 1.1pt,dash pattern=on 4pt off 4pt ,line join=round] ( 33.40, 89.68) --
	( 45.34, 91.48) --
	( 57.29, 90.28) --
	( 69.23, 80.08) --
	( 81.17, 76.19) --
	( 93.12, 80.38) --
	(105.06, 77.39) --
	(117.00, 71.39) --
	(128.95, 89.08) --
	(140.89,119.05);
\definecolor{drawColor}{RGB}{0,255,255}

\path[draw=drawColor,line width= 1.1pt,dash pattern=on 2pt off 2pt on 6pt off 2pt ,line join=round] ( 33.40, 78.88) --
	( 45.34, 88.68) --
	( 57.29, 88.03) --
	( 69.23, 77.84) --
	( 81.17, 75.89) --
	( 93.12, 78.74) --
	(105.06, 73.49) --
	(117.00, 72.29) --
	(128.95, 79.33) --
	(140.89,119.05);
\definecolor{drawColor}{RGB}{0,0,0}

\path[draw=drawColor,line width= 1.1pt,dash pattern=on 4pt off 4pt ,line join=round] ( 33.40, 90.28) --
	( 45.34, 86.28) --
	( 57.29, 91.18) --
	( 69.23, 80.23) --
	( 81.17, 76.19) --
	( 93.12, 78.88) --
	(105.06, 73.64) --
	(117.00, 72.59) --
	(128.95, 79.48) --
	(140.89,119.05);
\definecolor{drawColor}{RGB}{102,205,170}

\path[draw=drawColor,line width= 1.1pt,dash pattern=on 2pt off 2pt on 6pt off 2pt ,line join=round] ( 33.40, 62.92) --
	( 45.34, 75.55) --
	( 57.29, 82.42) --
	( 69.23, 85.00) --
	( 81.17, 86.20) --
	( 93.12, 89.62) --
	(105.06, 90.52) --
	(117.00, 91.48) --
	(128.95, 92.79) --
	(140.89,119.05);

\path[] ( 33.40, 11.14) --
	( 45.34, 11.14) --
	( 57.29, 11.14) --
	( 69.23, 11.14) --
	( 81.17, 11.14) --
	( 93.12, 11.14) --
	(105.06, 11.14) --
	(117.00, 11.14) --
	(128.95, 11.14) --
	(140.89, 11.14);

\path[] ( 33.40, 11.14) --
	( 45.34, 11.14) --
	( 57.29, 11.14) --
	( 69.23, 11.14) --
	( 81.17, 11.14) --
	( 93.12, 11.14) --
	(105.06, 11.14) --
	(117.00, 11.14) --
	(128.95, 11.14) --
	(140.89, 11.14);

\path[] ( 33.40, 11.14) --
	( 45.34, 11.14) --
	( 57.29, 11.14) --
	( 69.23, 11.14) --
	( 81.17, 11.14) --
	( 93.12, 11.14) --
	(105.06, 11.14) --
	(117.00, 11.14) --
	(128.95, 11.14) --
	(140.89, 11.14);
\end{scope}
\begin{scope}
\path[clip] (  0.00,  0.00) rectangle (151.77,144.54);
\definecolor{drawColor}{RGB}{190,190,190}

\node[text=drawColor,rotate= 90.00,anchor=base,inner sep=0pt, outer sep=0pt, scale=  0.77] at ( 22.33, 35.12) {0.2};

\node[text=drawColor,rotate= 90.00,anchor=base,inner sep=0pt, outer sep=0pt, scale=  0.77] at ( 22.33, 59.10) {0.4};

\node[text=drawColor,rotate= 90.00,anchor=base,inner sep=0pt, outer sep=0pt, scale=  0.77] at ( 22.33, 83.08) {0.6};

\node[text=drawColor,rotate= 90.00,anchor=base,inner sep=0pt, outer sep=0pt, scale=  0.77] at ( 22.33,107.06) {0.8};
\end{scope}
\begin{scope}
\path[clip] (  0.00,  0.00) rectangle (151.77,144.54);
\definecolor{drawColor}{gray}{0.20}

\path[draw=drawColor,line width= 0.6pt,line join=round] ( 25.27, 35.12) --
	( 28.02, 35.12);

\path[draw=drawColor,line width= 0.6pt,line join=round] ( 25.27, 59.10) --
	( 28.02, 59.10);

\path[draw=drawColor,line width= 0.6pt,line join=round] ( 25.27, 83.08) --
	( 28.02, 83.08);

\path[draw=drawColor,line width= 0.6pt,line join=round] ( 25.27,107.06) --
	( 28.02,107.06);
\end{scope}
\begin{scope}
\path[clip] (  0.00,  0.00) rectangle (151.77,144.54);
\definecolor{drawColor}{gray}{0.20}

\path[draw=drawColor,line width= 0.6pt,line join=round] ( 33.40, 28.05) --
	( 33.40, 30.80);

\path[draw=drawColor,line width= 0.6pt,line join=round] ( 57.29, 28.05) --
	( 57.29, 30.80);

\path[draw=drawColor,line width= 0.6pt,line join=round] ( 81.17, 28.05) --
	( 81.17, 30.80);

\path[draw=drawColor,line width= 0.6pt,line join=round] (105.06, 28.05) --
	(105.06, 30.80);

\path[draw=drawColor,line width= 0.6pt,line join=round] (128.95, 28.05) --
	(128.95, 30.80);

\path[draw=drawColor,line width= 0.6pt,line join=round] (140.89, 28.05) --
	(140.89, 30.80);
\end{scope}
\begin{scope}
\path[clip] (  0.00,  0.00) rectangle (151.77,144.54);
\definecolor{drawColor}{RGB}{190,190,190}

\node[text=drawColor,anchor=base,inner sep=0pt, outer sep=0pt, scale=  0.78] at ( 33.40, 19.70) {.1};

\node[text=drawColor,anchor=base,inner sep=0pt, outer sep=0pt, scale=  0.78] at ( 57.29, 19.70) {.3};

\node[text=drawColor,anchor=base,inner sep=0pt, outer sep=0pt, scale=  0.78] at ( 81.17, 19.70) {.5};

\node[text=drawColor,anchor=base,inner sep=0pt, outer sep=0pt, scale=  0.78] at (105.06, 19.70) {.7};

\node[text=drawColor,anchor=base,inner sep=0pt, outer sep=0pt, scale=  0.78] at (128.95, 19.70) {.9};

\node[text=drawColor,anchor=base,inner sep=0pt, outer sep=0pt, scale=  0.78] at (140.89, 19.70) {.99};
\end{scope}
\begin{scope}
\path[clip] (  0.00,  0.00) rectangle (151.77,144.54);
\definecolor{drawColor}{RGB}{0,0,0}

\node[text=drawColor,anchor=base,inner sep=0pt, outer sep=0pt, scale=  1.21] at ( 87.15,  7.85) {\underline{\textbf{MISSING DATA}}};
\end{scope}
\begin{scope}
\path[clip] (  0.00,  0.00) rectangle (151.77,144.54);
\definecolor{drawColor}{RGB}{0,0,0}

\node[text=drawColor,anchor=base,inner sep=0pt, outer sep=0pt, scale=  0.88] at ( 87.15,132.98) {MNIST};
\end{scope}
\end{tikzpicture}} 
\end{tabular}
\caption{\textbf{Top-left corner:} Number of clusters obtained by \FSC\ as a function of the parameter $\lambdaa$ in \eqref{ifscEq}. \textbf{Rest:} Clustering error of \FSC\ and other baseline algorithms. \underline{Notice the different scales}. With full-data \FSC\ is rarely and barely outperformed by other algorithms. In contrast, when data is missing, \FSC\ outperforms other algorithms by a wide margin. For example, in simulations with $\n_\k=20$ (resp.~Yale dataset) and $\p=0.9$, \FSC\ achieves $7.5\%$ error (resp.~$25.7\%$), while the next best algorithm achieves $71.25\%$ (resp.~$64.79\%$). We point out that some curves are ``missing'' from some plots because some methods are not applicable. e.g., SCC cannot handle missing data, and Lift cannot handle large dimensions.} 
\label{resultsFig}
\end{figure*}

\subsubsection{Effect of the penalty parameter}
We first study the number of clusters obtained by \FSC\ as a function of $\lambdaa$, with the default settings above. Figure \ref{resultsFig} shows, consistent with our discussion in Section \ref{modelSelectionSec}, that if $\lambdaa=0$, \FSC\ assigns each point to its own cluster. As $\lambdaa$ increases, subspaces start fusing together up to the point where if $\lambdaa$ is too large, \FSC\ fuses all subspaces into one, and all data gets clustered together. Next we study performance. Figure \ref{resultsFig} shows that there is a wide range of values of $\lambdaa$ that produce low error, showing that \FSC\ is quite stable. Note that the error increases if $\lambdaa$ is too small or too large. This is consistent with our previous experiment, showing that extreme values of $\lambdaa$ produce too few or too many clusters.

\subsubsection {Effect of dimensionality} It is well-documented that data in lower-dimensional subspaces are easier to cluster \cite{infoTheoretic, hrmc, gssc, elhamifar, greg, ladmc}. In the extreme case, clustering $1$-dimensional subspaces requires a simple co-linearity test, and is theoretically possible with as little as $2$ samples per column \cite{infoTheoretic}. In contrast, no existing algorithm can successfully cluster $(\d-1)$-dimensional subspaces (hyperplanes), which is actually impossible even if one entry per column is missing \cite{infoTheoretic}. Of course, subspaces' dimensionality is relative to the ambient dimension: a $10$-dimensional subspace is a hyperplane in $\R{}^{11}$, but low-dimensional in $\R{}^{1000}$. In this experiment we test \FSC\ as a function of the {\em low-dimensionality} of the subspaces, i.e., the gap between the ambient dimension $\d$ and the subspaces' dimension $\r$. First we fix $\d=100$ and compute error as a function of $\r$. As $\r$ grows, the subspace becomes higher and higher-dimensional. Then we turn things around, fixing $\r=5$ and varying $\d$. As $\d$ grows, this subspace becomes lower and lower-dimensional. Figure \ref{resultsFig} shows that \FSC\ is more sensitive to high-dimensionality than the state-of-the-art. However, pay attention to the scale: even in the worst-case ($\r=30$), the gap between \FSC\ and the state-of-the-art is around $10\%$.

\subsubsection{Effect of noise}
\label{sec:Effect of noise} Figure \ref{resultsFig} shows that \FSC\ performs as well or better than the state-of-the-art with different noise levels. Recall that $\lambdaa$ quantifies the tradeoff between how accurately we want to represent each point $\x_\i$ (the first term in \eqref{ifscEq}), and how close subspaces from different points will be (second term), which in turn determines how subspaces fuse together, or equivalently, how many subspaces we will obtain. If data is completely noiseless, we expect to represent each point very accurately, and so we can use a smaller $\lambdaa$ (giving more weight to the first term). On the other hand, if data is noisy, we expect to represent each point within the noise level, and so we can use a larger $\lambdaa$. As a rule of thumb, we can use $\lambdaa$ inversely proportional to the noise level $\sigmaa$.

\subsubsection{Effect of the number of subspaces and data points}
\label{sec: Effect of the number of subspaces and data points} Figure \ref{resultsFig} shows that \FSC\ is quite robust to the number of subspaces. Recall that in our default settings, $\r=5$, so $\K \geq 20$ produces a full-rank data matrix $\X$. Figure \ref{resultsFig} also evaluates the performance of \FSC\ as a function of the columns per subspace $\n_\k$. Since $\r=5$, $\n_\k=6$ is information-theoretically necessary for subspace clustering, we conclude that \FSC\ only requires little more than that to perform as well as the state-of-the-art.

\subsubsection{Effect of missing data}
\label{sec:Effect of missing data} There is a tradeoff between the number of columns per subspace $\n_\k$ and the sampling rate $(1-\p)$ required for subspace clustering \cite{infoTheoretic}. The larger $\n_\k$, the higher $\p$ may be, and vice versa. Figure \ref{resultsFig} evaluates the performance of \FSC\ as a function of $\p$ with $\n_\k=20,50$ (few and many columns). Notice that if $\p \approx 0$ (few missing data), then \FSC\ performs as well as the state-of-the-art, and much better as $\p$ increases (many missing data); see for example $\n_\k=20$ and $\p=0.9$, where the best alternative algorithms gets $71.25\%$ error, which is close to random guessing (because there are $\K=4$ subspaces in our default settings). In contrast, \FSC\ gets $7.5\%$ error. Notice that $\p=0.9$ is very close to the exact information-theoretic minimum sampling rate $\p=1-(\r+1)/\d=0.94$ \cite{infoTheoretic}. Similar to noise, if there is much missing data the first term in \eqref{ifscEq} will carry less weight, which we can compensate by making $\lambdaa$ smaller.


\subsection{Real Data Experiments}
\subsubsection{ Face Clustering.} It has been shown that the vectorized images of the same person under different illuminations lie near a $9$-dimensional subspace \cite{lambertian}. In this experiment we evaluate the performance of \FSC\ at clustering faces of multiple individuals, using the Yale B dataset \cite{yale}, containing a total of $2432$ images, each of size $48 \times 42$, evenly distributed among $38$ individuals. To compare things vis \`a vis, before clustering, we use robust \PCA\ \cite{robustpca} on each cluster, to remove outliers; this is a widely used preprocessing step \cite{ssc,ewzf,ssp14,elhamifar}. In each of $30$ trials, we select $\K$ people uniformly at random, and record the clustering error. Figure \ref{resultsFig} shows that \FSC\ is very competitive and there is only a negligible gap between \FSC\ and the best alternative algorithm. Figure \ref{resultsFig} also shows the average clustering error as a function of the amount of missing data (induced uniformly at random), with $\K$ fixed to $6$ people. Consistent with our simulations, \FSC\ outperforms the state-of-the-art in the low-sampling regime (many missing data). For example, with $\p=0.9$ \FSC\ gets $25.7\%$ error, while the next best algorithm gets $64.79\%$. Note that $\p=0.9$ is quite close to the exact information-theoretic limit $\p=1-(\r+1)/\d=0.995$ \cite{infoTheoretic}.

\subsubsection{ Motion Segmentation.}
It is well-known that the locations over time of a rigidly moving object approximately lie in a $3$-dimensional affine subspace \cite{kanade,kanatani} (which can be thought as a $4$-dimensional subspace whose fourth component accounts for the offset). Hence, by tracking points in a video, and subspace clustering them, we can segment the multiple moving objects appearing in the video. In this experiment we test \FSC\ on this task, using the Hopkins 155 dataset \cite{hopkins}, containing sequences of points tracked over time in $155$ videos. Each video contains $\K=2,3$ objects. On average, each object is tracked on $\n_\k=133$ points (described by two coordinates) over $29$ frames, producing vectors in ambient dimension $\d=58$.
Figure \ref{resultsFig} shows the results. With full-data \FSC\ is far from the best, but has performance comparable to the rest of the algorithms. However, when data is missing, we again see that \FSC\ again dramatically outperforms the rest of the algorithms. Figure \ref{resultsFig} shows the average results over all videos when data is induced uniformly at random. For example, with $\p=0.9$, the best baseline algorithm gets $52.95\%$ error. In contrast, \FSC\ achieves $15.03\%$ error. Notice that $\p=0.9$ is very close to the exact information-theoretic minimum sampling rate $\p=10(\r+1)/\d=0.914$ \cite{infoTheoretic}.

\subsubsection{ Handwritten Digits Clustering.} As a last experiment we use \FSC\ to cluster vectorized images of handwritten digits, known to be well-approximated by 12-dimensional subspaces \cite{HW12}. For this purpose we use the MNIST dataset \cite{MNIST}, containing thousands of gray-scaled, $28 \times 28$ images. First we will test \FSC\ as a function of the number of digits (subspaces) in the mix. Following common practice, for each $\K=2,3,\dots,10$ we randomly selected $\K$ digits, $\n_\k=50$ images per digit, and aimed to cluster the images. Figure \ref{resultsFig} shows the average results of 10 independent trials of each configuration. Consistent with our previous experiments, if no data is missing, \FSC\ performs comparable to the rest of the algorithms with a gap (in the worst-cases) no larger than $5\%$. However, as soon as we induce missing data (uniformly at random), \FSC\ starts outperforming all other methods by a huge margin (up to $40\%$). Figure \ref{resultsFig} shows the results of this experiment.

\section*{Broader Impact}
This paper introduces a novel strategy to address missing data in subspace clustering, which enables clustering and completion in regimes where other methods fail. Practitioners in computer vision, recommender systems, networks inference, and data science in general can use our new method. We expect this paper to motivate the learning community to explore new directions that stem from this initial work. These include the investigation of (i) ADMM and AMA formulations of \eqref{ifscEq} that reduce computational complexity (as in \cite{chi} for euclidean clustering), (ii) optimal initializations, greedy, adaptive, data-driven, and outlier-robust variants, and (iii) geodesics on the Stiefel and Grassmann manifolds (similar to \cite{grouse} for subspace tracking) to avoid the inversion of the term $\U{}_\i^\T\U_\i$ in $\P_\i$, which may become ill-conditioned. Ultimately, we hope this publication spurs discussions and insights that ultimately lead to better methods and a better understanding of subspace clustering when data is missing.



\end{document}